\newcommand{\ouralg}{\texttt{D-DRO}\xspace}
\newcommand{\dagalg}{\texttt{InnerMax}\xspace}
\newcommand{\dro}{DRO\xspace}
\newcommand{\kldro}{\texttt{KL-DRO}\xspace}
\newcommand{\wdro}{\texttt{W-DRO}\xspace}
\newcommand{\ml}{\texttt{ML}\xspace}
\newcommand{\dml}{\texttt{DML}\xspace}
\newcommand{\ood}{OOD\xspace}
\newcommand{\kl}{KL\xspace}
\newcommand{\ddpm}{DDPM\xspace}
\newcommand{\zinit}{z_0\xspace}
\newcommand{\ztheta}{z_\theta\xspace}
\newcommand{\sdes}{SDEs\xspace}
\title[Distributionally Robust Optimization via Diffusion Ambiguity Modeling]{Distributionally Robust Optimization via Diffusion Ambiguity Modeling}
\begin{document}

\maketitle

\begin{abstract}

This paper studies Distributionally Robust Optimization (DRO), a fundamental framework for enhancing the robustness and generalization of statistical learning and optimization. An effective ambiguity set for DRO must involve distributions that remain consistent with the nominal distribution while being diverse enough to account for a variety of potential scenarios. Moreover, it should lead to tractable DRO solutions. To this end, we propose a diffusion-based ambiguity set design that captures various adversarial distributions beyond the nominal support space while maintaining consistency with the nominal distribution. Building on this ambiguity modeling, we propose Diffusion-based DRO (\ouralg), a tractable DRO algorithm that solves the inner maximization over the parameterized diffusion model space. We formally establish the stationary convergence performance of \ouralg and empirically demonstrate its superior Out-of-Distribution (OOD) generalization performance in a ML prediction task.

\end{abstract}

\begin{keywords}
 Distributionally Robust Optimization, Diffusion Models, OOD Generalization%
\end{keywords}

\section{Introduction}
Distributionally Robust Optimization (DRO) is a fundamental framework for enhancing the robustness of statistical learning and optimization problems, particularly under Out-of-Distribution (\ood) scenarios \cite{OOD_generalization_liu2021towards,OOD_ML_arjovsky2020out}.
DRO formulates a minimax optimization problem, where the inner maximization identifies the worst-case distribution within an ambiguity set, and the outer minimization optimizes the decision variable against this worst-case scenario \cite{DRL_chen2020distributionally, DRQL_liu2022distributionally, DR_regression_chen2018robust, WDRL_kuhn2019wasserstein, DRRL_smirnova2019distributionally}.
Unlike non-probabilistic robust optimization, DRO leverages probabilistic uncertainty modeling to enable improved generalization performance. This property has made DRO increasingly important in ML for addressing distribution shifts, noisy data, and adversarial conditions \cite{distribution_shift_quinonero2022dataset,robut_learning_simulation_muratore2022robot,DRL_environment_generation_ren2022distributionally,noisy_label_li2019learning,noisy_data_veit2017learning}.

The performance of \dro algorithms critically depends on the design of the ambiguity set, which must contain meaningful distributional variations around the nominal distribution. A common approach is to model the ambiguity set using $\phi$-divergences, such as the Kullback--Leibler (KL) divergence~\cite{KL-DRO_hu2013kullback, DR_BO_divergence_husain2023distributionally, DRBO_kirschner2020distributionally, Conic_reformulation_KL_DRO_kocuk2020conic, DRQL_liu2022distributionally}. 
Although such $\phi$-divergence-based formulations can sometimes yield closed-form solutions~\cite{KL-DRO_hu2013kullback,DR_BO_divergence_husain2023distributionally}, they require that any distribution $P$ in the ambiguity set be \textit{absolutely continuous} with respect to the nominal distribution $P_0$ (denoted $P \ll P_0$), meaning that for any measurable set $\mathcal{A}$, if $P_0(\mathcal{A}) = 0$, then $P(\mathcal{A}) = 0$. This implicit constraint limits robustness in scenarios with support shifts. 
In contrast, Wasserstein-DRO leverages the Wasserstein distance to define the ambiguity set, allowing for support shifts. However, solving Wasserstein-DRO over the infinite probability space is difficult. Some approaches~\cite{DRO_mohajerin2018data,DRL_chen2020distributionally,Wasserstein_DRO_gao2023distributionally} reformulate Wasserstein-DRO as a finite-dimensional optimization problem based on the convex assumptions, which typically do not hold in ML. Other methods approximate Wasserstein-DRO via adversarial optimization~\cite{staib2017distributionally,Wasserstein_DRO_gao2023distributionally}, but such relaxations are overly conservative, limiting their ability to fully leverage the benefits of probabilistic uncertainty modeling.

Recent advances have aimed to address the challenges of ambiguity modelling in \dro. 
For example, \cite{NEURIPS2022_da535999} incorporates data geometric properties into the design of discrepancy metrics, thereby reducing the complexity of the ambiguity set. 
In addition, a recent work~\cite{sinkhorn_DRO_yang2025nested} studies \dro with ambiguity sets defined by a generalized Sinkhorn distance, which enables modelling uncertainty across distributions with different support spaces. 
Another work~\cite{DRL_environment_generation_ren2022distributionally} constructs a Wasserstein-based ambiguity set in the latent space of generative models and subsequently applies Wasserstein-\dro methods to solve the problem. 
Additional related studies are discussed in Appendix~\ref{sec:related_work}.

Different from these approaches, our work is the first to model the ambiguity set in the space of diffusion models, which offers several advantages:  
(1) Diffusion models have a strong capability to represent the underlying data distribution, ensuring that the distributions in the ambiguity set remain consistent with the nominal distribution.  
(2) Diffusion models are capable of producing diverse samples beyond the training support space, thereby enabling the discovery of worst-case distributions.
(3) Diffusion models provide a finite, parameterized optimization space, avoiding the need to solve problems over an infinite probability space.

Our main contributions are summarized below:  
\textbf{(1)} 
We introduce a novel \emph{Diffusion Ambiguity Set} for \dro, which encompasses diverse distributions while preserving consistency with the nominal distribution.
\textbf{(2)} We design an inner maximization procedure for \dro with the proposed Diffusion Ambiguity Set, enabling tractable iterative optimization within a finite, parameterized space.  
\textbf{(3)} We propose \ouralg (Algorithm~\ref{DAG-DRL}), which solves the resulting minimax optimization problem, and formally establish its stationary convergence in Theorems~\ref{thm:analysis} and~\ref{thm:convergence_minmax}.  
\textbf{(4)} We demonstrate the superior performance of \ouralg on the challenging ML task of renewable energy prediction.

\section{DRO and Ambiguity Modeling}
\textbf{Distributionally Robust Optimization.}
 \dro optimizes for the worst-case performance given an ambiguity set constructed on a nominal distribution $P_0$, which can be an empirical distribution $S_0$.  Consider an objective function $f(w,x)$ with the decision variable $w\in\mathcal{W}$ and the random parameter $x\in \mathcal{X}$. Given the nominal distribution $P_0(x)$ of the random parameter $x$, DRO solves the following minimax optimization problem.
\begin{equation}\label{eqn:DRO}
w = \min_{w\in \mathcal{W}}\max_{P\in \mathcal{B}(P_0,\epsilon)} \mathbb{E}_{x\sim P}[f(w,x)],
\end{equation}
where $\mathcal{B}(P_0,\epsilon)$ is the ambiguity set containing possible testing distributions which is typically modeled as a distribution ball $\mathcal{B} (P_0,\epsilon)= \left\{ P \mid D(P,P_0) < \epsilon \right\}$ given a distribution discrepancy measure $D$ and an adversary budget $\epsilon$. 

\textbf{Ambiguity modeling in \dro.}  
The choice of the ambiguity set in \dro has a significant impact on both generalization performance and solution tractability. We observe that a well-designed ambiguity set in \dro should satisfy the following properties:  
First, it should include diverse distributions beyond the support of the nominal distribution, enabling the identification of various worst-case distributions.  
Second, the distributions within the ambiguity set should remain realistic and consistent with the nominal distribution, balancing the average-case and worst-case performance.  
Finally, the ambiguity set should facilitate a tractable solution of the \dro problem despite the infinite probability space.

\section{Method}
\subsection{Diffusion Ambiguity Modeling}
\textbf{Diffusion models}. Diffusion models learn an underlying distribution $P_0$ from its finite dataset $S_0$ and can generate diverse samples from this distribution. The model training and inference are based on a forward process and a reverse process, detailed in Appendix~\ref{sec:dm}.  
The forward process begins with an initial sample $x_0 \in S_0$ and evolves according to a stochastic process to produce random variables $x_1, \dots, x_T$ with marginal distributions $P_t$, $t \in [T]$. The reverse process starts with $x_T$ drawn from a prior distribution $\pi$ that approximates $P_T$ and reconstructs $x_{T-1}, \dots, x_0$ by following the reverse diffusion process, which depends on the score function (the gradient of the log-density of the underlying distribution $\nabla_x \log P_t(x)$).

The score-matching method employs an ML model $s_\theta(x,t)$ with parameters $\theta$ to approximate the distribution gradient $\nabla_x \log P_t(x)$ by minimizing an empirical score-matching loss $J(\theta, S_0)$ on the dataset $S_0$.  
Once the score model $s_\theta(x,t)$ is obtained, we can generate samples $x_0, \dots, x_{T-1}$ according to the distribution
\(
P_\theta(x_{0:T}) = \pi(x_T) \prod_{t=1}^T P_\theta(x_{t-1} \mid x_t), 
\) where  \(P_\theta(x_{t-1} \mid x_t) = \mathcal{N}\big(x_{t-1}; \mu_\theta(x_t, t), \Sigma_\theta(x_t, t)\big)\) where $\mu_\theta(x_t, t)$ and $\Sigma_\theta(x_t, t)\big)$ rely on the score model parameter $\theta$.

Theorem 1 in \cite{MLE_score_based_song2021maximum} (restated in Lemma~\ref{KL_bound_diffusion}) shows that if the score-matching loss is bounded, i.e., $J(\theta, S_0) \le \epsilon$, then the KL divergence $D_{\mathrm{KL}}(P_0 \,\|\, P_\theta)$ is bounded by $\epsilon$ plus additional approximation error terms.  
We note that the KL divergence $D_{\mathrm{KL}}(P_0 \,\|\, P_\theta)$ in Lemma~\ref{KL_bound_diffusion} differs from the KL divergence $D_{\mathrm{KL}}(P_\theta \,\|\, P_0)$ commonly used in \kldro. Importantly, the former allows $P_\theta$ to have a broader support than $P_0$ (i.e., $P_0 \ll P_\theta$).

\textbf{Diffusion ambiguity set}.
We can model the ambiguity set based on the score-matching loss of a diffusion model, due to its property to constrain the distributional discrepancy while allowing a broader support space. This leads to \dro with diffusion ambiguity sets:
\begin{equation}\label{eqn:objective}
\min_{w \in \mathcal{W}} \max_{\theta \in \Theta} \mathbb{E}_{x \sim P_\theta}[f(w, x)], 
\quad \mathrm{s.t.} \quad J(\theta, S_0) \le \epsilon,
\end{equation}
where $P_\theta$ denotes the  distribution of the diffusion reverse process, and $\epsilon$ is the adversarial budget.  

The diffusion ambiguity set enhances \dro performance as follows.
Due to the distribution modelling capability of diffusion models, the score-matching constraint ensures that the diffusion-modelled distributions remain consistent with the nominal data, mitigating over-conservativeness issues in \dro.  
It also leverages diffusion models’ ability to generate diverse samples beyond the nominal support, enabling the identification of worst-case distributions given any $w$. 
Furthermore, the inner maximization in \eqref{eqn:objective} operates in a finite, parameterized space, ensuring tractable optimization.

\subsection{Diffusion-Based Inner Maximization}

To solve the inner maximization of \eqref{eqn:objective}, which is a constrained optimization over the diffusion parameter space, we adopt a dual learning approach: introducing a Lagrangian dual $\mu>0$ to reformulate it as the unconstrained problem
\begin{equation}\label{eqn:lagrangian_objective}
\max_{\theta} \; \mathbb{E}_{x\sim P_{\theta}}[f(w,x)] - \mu J(\theta, S_0),
\end{equation}
and updating $\mu$ via dual gradient descent. As is shown by \dagalg in Algorithm \ref{DAG-DRL}, we increase $\mu$ when $J$ exceeds the budget $\epsilon$ and decrease it otherwise. 

 We apply the policy optimization methods to transform the objective in \eqref{eqn:lagrangian_objective} into a tractable form. Vanilla policy gradient \cite{policy_gradient_sutton1999policy} directly calculates the empirical gradient of the objective in \eqref{eqn:lagrangian_objective} as:
 \begin{equation}\label{eqn:vpg}
  \max_{\theta} \hat{\mathbb{E}}_{ P_{\theta}(x_{0:T})}[
    \ln P_{\theta}(x_{0:T}) \cdot f(w,x_0)]-\mu\cdot J(\theta, S_0),
\end{equation}
where $\hat{\mathbb{E}}_{ P_{\theta}(x_{0:T})}$ is the empirical mean based on the $T-$step samples for the backward process of the diffusion model $P_{\theta}$, and $\ln P_{\theta}(x_{0:T})=-\sum_{t=1}^T[x_{t-1}-\mu_{\theta}(x_t,t)]^2+C_2$ where $C_2$ is a constant. The derivation details are given in Appendix~\ref{sec:policy_gradient}.

 Proximal Policy Optimization (PPO) \cite{PPO_schulman2017proximal} is believed to have more stable performance. It transforms the objective  \eqref{eqn:lagrangian_objective} into a differentiable form as follow:
 \begin{equation}\label{eqn:ppo}
  \max_{\theta} \hat{\mathbb{E}}_{P_{\theta_{\mathrm{old}}}(x_{0:T})}[
   \min(r_{\theta}(x_{0:T})f(w,x_0)), \mathrm{clip}(r_{\theta}(x_{0:T}),1-\kappa,1+\kappa)\cdot f(w,x_0))]-\mu\cdot J(\theta, S_0),
\end{equation}
where  $P_{\theta_{\mathrm{old}}}$ is a reference diffusion model used for sampling the backward sequence, the probability ratio is $r_{\theta}(x_{0:T})=\frac{P_{\theta }(x_{0:T})}{P_{\theta_{\mathrm{old}}}(x_{0:T})}=\exp\{-\sum_{t=1}^T(\frac{\|x_{t-1}-\mu_{\theta}(x_t,t)\|^2}{2\sigma_{t}^2}-\frac{\|x_{t-1}-\mu_{\theta_{\mathrm{old}}}(x_t,t)\|^2}{2\sigma_{t}^2})\}$, and $\kappa\in(0,1)$ is the clipping parameter to avoid overly-large policy updates. To reduce the training complexity, instead of optimizing all the $T$ steps, we can only optimize the last $T'$ steps of the backward process by choosing $r_{\theta}(x_{0:T'})=\exp\{-\sum_{t=1}^{T'}(\frac{\|x_{t-1}-\mu_{\theta}(x_t,t)\|^2}{2\sigma_{t}^2}-\frac{\|x_{t-1}-\mu_{\theta_{\mathrm{old}}}(x_t,t)\|^2}{2\sigma_{t}^2})\}$ and only keep the loss terms for corresponding steps in $J(\theta, S_0)$. This practice has been verified by our experiments in Appendix~\ref{app:exp}. More derivation details are given in Appendix~\ref{sec:ppo}

\begin{algorithm}[t!]
    \caption{Diffusion-based \dro (\ouralg)}
    \label{DAG-DRL}
    \KwIn{Training dataset $S_0$; Adversary budget $\epsilon>0$; Step size $\eta>0$, $\lambda>0$.}
    
    \textbf{Initialization:} Initialize decision variable $w$, diffusion parameter $\theta$ and Lagrangian weight $\mu>0$\;
    
    \For{$j=1,2,\cdots,I$}{
    \tcp{\scriptsize Diffusion-based inner maximization (\dagalg)}
        \For{$k=1,2,\cdots,K$}{
           \;\:Update diffusion parameter $\theta_k$ by solving \eqref{eqn:lagrangian_objective} given $\mu$\; \\
            Update Lagrangian parameter: $\mu \leftarrow \max\{0,\mu + \eta \big(J(\theta_k,S_0)-\epsilon\big)\}$\;
        }
        \tcp{\scriptsize Outer minimization to update decision variable}
         \;Generate dataset $S_{j}$ with diffusion model $P_{\theta^{(j)}}$ with $\theta^{(j)}$ uniformly selected from $[\theta_1,\cdots, \theta_K]$\;\\
        Update decision variable:$w$: $w_j=w_{j-1}-\lambda\cdot \nabla_w \mathbb{E}_{x\in S_{j}}[f(w_{j-1},x)]$\;
    }    \Return $w$ uniformly selected from $[w_1,\cdots, w_I]$\;
    
    \end{algorithm}

\subsection{\ouralg Algorithm}
We design \ouralg in Algorithm \ref{DAG-DRL} which solves the mini-max optimization following the framework of Gradient Descent with Max-Oracle (GDMO) in \cite{nonconvex_nonconcave_mini_max_jin2020local}. In each iteration, we first run \dagalg to search for the worst-case diffusion model $P_{\theta^{(j)}}$ that maximizes the expected loss of the current variable $w$. Next, we generate an adversarial dataset $S_{j}$ based on $P_{\theta^{(j)}}$ and use it to update $w$. The convergence of Algorithm~\ref{DAG-DRL} in proved in Theorem \ref{thm:analysis} and Theorem \ref{thm:convergence_minmax}.

\section{Analysis}
The problem in \eqref{eqn:DRO} is a nonconvex--nonconcave min--max optimization problem, which makes it challenging to obtain a solution with convergence guarantee. 
This section formally establishes an insightful convergence bound for \ouralg.
    \begin{theorem}[Convergence of Inner Maximization]\label{thm:analysis}
       Let $\theta^*$ be the optimal diffusion parameter that solves the inner maximization \eqref{eqn:objective} given a variable $w$. 
       If the expected score-matching loss is bounded as $J(\theta)\leq \bar{J}$ and the step size is chosen as $\eta\sim\mathcal{O}(\frac{1}{\sqrt{K}})$, the inner maximization error holds that    \begin{equation}\label{eqn:adversarial_stren}
         \Delta':= \mathbb{E}_{x\sim P_{\theta^*}}[f(w,x)]-\mathbb{E}_k \mathbb{E}_{x\sim P_{\theta_k}}[f(w,x)]\leq \frac{1}{\sqrt{K}} \max\{\epsilon, \bar{J}\}\|\mu^{(1)}\|,
       \end{equation}
       where the outer expectation is taken over the randomness of output selection.
       In addition, given the assumptions in \ref{sec:assumptions}, the \kl-divergence with respect to the nominal distribution is bounded as \begin{equation}\label{eqn:consistency}
    \mathbb{E}_k[D_{\mathrm{KL}}(P_0||P_{\theta_k})]\leq \epsilon + \frac{\max\{\epsilon, \bar{J}\}|\mu_C-\mu^{(1)}|}{\sqrt{K} (\mu_C-\mu^*)}+ D_{\mathrm{KL}}(P_T||\pi)+C_1 ,
 \end{equation}
    where $\mu_C>\mu^*$ with $\mu^*$ being the optimal dual variable, $C_1$ is a constant, $P_T$ is the output distribution of the forward process, and $\pi$ is the initial distribution of the reverse process. 
    \end{theorem}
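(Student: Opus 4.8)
The plan is to exploit the one structural feature of \eqref{eqn:objective} that survives its nonconvexity: although $F(\theta):=\mathbb{E}_{x\sim P_\theta}[f(w,x)]$ and $J(\cdot,S_0)$ are not concave/convex in $\theta$, the Lagrangian dual $d(\mu):=\max_{\theta}\{F(\theta)-\mu g(\theta)\}$, with $g(\theta):=J(\theta,S_0)-\epsilon$, is a pointwise maximum of functions that are \emph{affine} in $\mu$, hence convex. First I would observe that the inner loop \dagalg in Algorithm~\ref{DAG-DRL} is exactly projected subgradient descent for $\min_{\mu\ge 0}d(\mu)$: the $\theta$-update returns some $\theta_k\in\arg\max_{\theta}\{F(\theta)-\mu^{(k)}g(\theta)\}$ (the $\theta$-independent term in \eqref{eqn:lagrangian_objective} does not move the maximizer), so by Danskin's theorem $-g(\theta_k)\in\partial d(\mu^{(k)})$, and the $\mu$-update is $\mu^{(k+1)}=[\mu^{(k)}-\eta(-g(\theta_k))]_+$. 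Because $J\le\bar J$, the subgradient is bounded, $|g(\theta_k)|=|J(\theta_k,S_0)-\epsilon|\le\max\{\epsilon,\bar J\}=:G$. Expanding $\|\mu^{(k+1)}-\mu\|^2\le\|\mu^{(k)}+\eta g(\theta_k)-\mu\|^2$ (nonexpansiveness of the projection onto $\{\mu\ge 0\}$), telescoping over $k=1,\dots,K$, and dividing by $K$ would give the ergodic inequality
\[
\frac1K\sum_{k=1}^K \mu^{(k)} g(\theta_k)\ \ge\ \mu\cdot\frac1K\sum_{k=1}^K g(\theta_k)\ -\ \frac{\|\mu^{(1)}-\mu\|^2}{2\eta K}\ -\ \frac{\eta G^2}{2},\qquad\forall\,\mu\ge 0 .
\]
Everything else reduces to choosing the comparator $\mu$ well and using $\eta\sim\mathcal{O}(1/\sqrt K)$.

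For \eqref{eqn:adversarial_stren}: exactness of the $\theta$-update gives $F(\theta_k)=d(\mu^{(k)})+\mu^{(k)}g(\theta_k)$, and weak duality (feasibility $g(\theta^*)\le 0$ together with $\mu^{(k)}\ge 0$) gives $d(\mu^{(k)})\ge F(\theta^*)$. Averaging yields $\frac1K\sum_k F(\theta_k)\ge F(\theta^*)+\frac1K\sum_k\mu^{(k)}g(\theta_k)$, hence $\Delta'\le-\frac1K\sum_k\mu^{(k)}g(\theta_k)$. Setting $\mu=0$ in the ergodic inequality bounds the right-hand side by $\frac{\|\mu^{(1)}\|^2}{2\eta K}+\frac{\eta G^2}{2}$; choosing $\eta=\|\mu^{(1)}\|/(G\sqrt K)$ balances the two terms and gives $\Delta'\le G\|\mu^{(1)}\|/\sqrt K$, which is \eqref{eqn:adversarial_stren}. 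Note this half uses only weak duality and the subgradient bound, consistent with the hypotheses stated for \eqref{eqn:adversarial_stren}.

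For \eqref{eqn:consistency}: the extra object to control is the average constraint violation $\bar g:=\frac1K\sum_k g(\theta_k)=\mathbb{E}_k[J(\theta_k,S_0)]-\epsilon$. I would use the shifted comparator $\mu=\mu_C$ for any $\mu_C>\mu^*$, where $\mu^*\in\arg\min_{\mu\ge 0}d(\mu)$ is the optimal dual variable (finite under a Slater-type condition among the assumptions in~\ref{sec:assumptions}). The ergodic inequality then gives $\mu_C\,\bar g\le\frac1K\sum_k\mu^{(k)}g(\theta_k)+\frac{\|\mu^{(1)}-\mu_C\|^2}{2\eta K}+\frac{\eta G^2}{2}$. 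For the first term I would write $\frac1K\sum_k\mu^{(k)}g(\theta_k)=\frac1K\sum_k\big(F(\theta_k)-d(\mu^{(k)})\big)$ and bound it by $\mu^*\bar g$, using $F(\theta_k)\le d(\mu^*)+\mu^* g(\theta_k)$ (since $F(\theta_k)=\big(F(\theta_k)-\mu^* g(\theta_k)\big)+\mu^* g(\theta_k)\le d(\mu^*)+\mu^* g(\theta_k)$) together with $d(\mu^{(k)})\ge d(\mu^*)$. Rearranging gives $(\mu_C-\mu^*)\bar g\le\frac{\|\mu^{(1)}-\mu_C\|^2}{2\eta K}+\frac{\eta G^2}{2}$, and with $\eta=\|\mu^{(1)}-\mu_C\|/(G\sqrt K)$ this becomes $\bar g\le \frac{G|\mu_C-\mu^{(1)}|}{(\mu_C-\mu^*)\sqrt K}$, i.e. $\mathbb{E}_k[J(\theta_k,S_0)]\le\epsilon+\frac{\max\{\epsilon,\bar J\}|\mu_C-\mu^{(1)}|}{(\mu_C-\mu^*)\sqrt K}$. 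Finally I would invoke Lemma~\ref{KL_bound_diffusion} for each $\theta_k$ (its hypotheses hold under the assumptions in~\ref{sec:assumptions}), giving $D_{\mathrm{KL}}(P_0\|P_{\theta_k})\le J(\theta_k,S_0)+D_{\mathrm{KL}}(P_T\|\pi)+C_1$, and average over $k$ to obtain \eqref{eqn:consistency}.

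The main obstacle is conceptual rather than computational: recognizing that the dual $d(\mu)$ is convex despite the nonconvex inner maximization, so that the $\mu$-update is a genuine projected subgradient step with an ergodic $\mathcal{O}(1/\sqrt K)$ rate — once that is in place, both bounds are bookkeeping with the correct comparator. The remaining care points I anticipate are (i) justifying $-g(\theta_k)\in\partial d(\mu^{(k)})$, which requires $\theta_k$ to be an \emph{exact} maximizer of the $\theta$-update (an inexact inner solve would add an optimization-error term to both bounds), and (ii) ensuring the optimal dual variable $\mu^*$ is finite and that Lemma~\ref{KL_bound_diffusion} is applicable all along the trajectory $\{\theta_k\}$; these are precisely what the assumptions in~\ref{sec:assumptions} are there to supply.
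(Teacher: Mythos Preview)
Your proposal is correct and follows essentially the same route as the paper: the paper also derives the ergodic inequality $\frac{1}{K}\sum_k\langle\mu_k-\mu,b_k\rangle\le \eta G^2+\frac{1}{2K\eta}\|\mu-\mu^{(1)}\|^2$ (its Lemma~\ref{lma:dual_convergence}, with $b_k=-g(\theta_k)$), then instantiates it with $\mu=0$ and weak duality to obtain \eqref{eqn:adversarial_stren}, and with $\mu=\mu_C>\mu^*$ together with the bound $\sum_k\mu_k b_k\ge-\mu^*\sum_k v_k$ (equivalent to your $\sum_k\mu^{(k)}g(\theta_k)\le\mu^*\bar g$) to control the average constraint violation before applying Lemma~\ref{KL_bound_diffusion}. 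Your derivation of the constraint-violation step via $F(\theta_k)\le d(\mu^*)+\mu^* g(\theta_k)$ and $d(\mu^{(k)})\ge d(\mu^*)$ is slightly more direct than the paper's detour through $\max_{\theta:J\le\epsilon+v_k}F(\theta)$, but the argument is otherwise the same.
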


Proofs of Theorem \ref{thm:analysis} are provided in Appendix \ref{sec:proofconvergencemax}. The bound shows that when the inner iteration number $K$ is sufficiently large, the inner maximization error will be small enough so that \ouralg can find a worst-case distribution in the Diffusion Ambiguity Set. 
Moreover, the reverse KL-divergence w.r.t. the nominal distribution is bounded by the budget $\epsilon$, the KL-divergence between $P_T$ and $\pi$, and the constant gap $C_1$ due to the approximation of the score matching loss \cite{MLE_score_based_song2021maximum}.This implies that we can adjust the adversarial budget $\epsilon$ to obtain a worst-case distribution that is consistent with the nominal distribution. 
    
\begin{theorem}[Convergence of \ouralg]
  \label{thm:convergence_minmax} Assume that the objective $f(w,x)$ is $\beta-$smooth and $L-$ Lipschitz with respect to $w$ and is upper bounded by $\bar{f}$. If each  dataset $S_j$ sampled from the diffusion model contains $n$ examples and the step size is chosen as $\lambda\sim\mathcal{O}(\sqrt{\frac{1}{\beta L^2 H}})$, then with probability $1-\delta, \delta\in(0,1)$, the average norm of the Moreau envelope of  $\phi(w):= \max_{\theta}\mathbb{E}_{P_{\theta}}[f(w,x)]$ satisfies
   \begin{equation}
\begin{split}
\mathbb{E}_{j,k}\left[\|\nabla\phi_{\frac{1}{2\beta}}(w)\|^2\right]
\leq 4\beta \Delta'+\frac{V_1}{\sqrt{n}}+\frac{V_2}{\sqrt{H}},
\end{split}
\end{equation}
where the expectation is taken over the randomness of output selection, $\Delta'$ is the error of inner maximization bounded in Theorem \ref{thm:analysis}, $V_1=8\beta \bar{f}\sqrt{\log(2/\delta)}$ and $V_2=4L\sqrt{(\phi_{\frac{1}{2\beta}}(w_{1})-\min_w\phi(w))\beta}$.
\end{theorem}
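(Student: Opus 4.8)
The argument follows the Gradient-Descent-with-Max-Oracle template for \emph{weakly convex} minimization (as in the framework of \cite{nonconvex_nonconcave_mini_max_jin2020local}), tracking the Moreau envelope of $\phi$ along the outer iterates (I identify the outer iteration count $I$ in Algorithm~\ref{DAG-DRL} with $H$). First I would record the structural facts. Since $f(w,x)$ is $\beta$-smooth in $w$, each map $w\mapsto\mathbb{E}_{P_\theta}[f(w,x)]$ is $\beta$-smooth, hence $\beta$-weakly convex; a pointwise supremum over $\theta$ of $\beta$-weakly convex functions is again $\beta$-weakly convex, so $\phi(w)=\max_\theta\mathbb{E}_{P_\theta}[f(w,x)]$ is $\beta$-weakly convex. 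Consequently the Moreau envelope $\phi_{1/(2\beta)}(w)=\min_{u}\{\phi(u)+\beta\|u-w\|^2\}$ is well-defined and $C^1$, the proximal point $\hat w$ is unique with $\nabla\phi_{1/(2\beta)}(w)=2\beta(w-\hat w)$, and $\phi_{1/(2\beta)}(w)\le\phi(w)$, which forces $\phi(\hat w)-\phi(w)\le-\beta\|\hat w-w\|^2=-\tfrac{1}{4\beta}\|\nabla\phi_{1/(2\beta)}(w)\|^2$. This reduces the goal to showing that $\mathbb{E}_{j,k}[\|\nabla\phi_{1/(2\beta)}(w_j)\|^2]=4\beta^2\,\mathbb{E}_{j,k}\|w_j-\hat w_j\|^2$ is small on average.

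\textbf{Per-iteration near-descent.} The outer update is $w_{j}=w_{j-1}-\lambda g_{j}$ with $g_{j}=\nabla_w\mathbb{E}_{x\in S_{j}}[f(w_{j-1},x)]$, and $\|g_{j}\|\le L$ by $L$-Lipschitzness. Applying the envelope at $\hat w_{j-1}$ and expanding the square gives
\begin{equation*}
\phi_{1/(2\beta)}(w_{j})\le\phi(\hat w_{j-1})+\beta\|\hat w_{j-1}-w_{j}\|^2=\phi_{1/(2\beta)}(w_{j-1})+2\beta\lambda\langle\hat w_{j-1}-w_{j-1},\,g_{j}\rangle+\beta L^2\lambda^2 .
\end{equation*}
To control the cross term I would use $\beta$-smoothness of $u\mapsto\mathbb{E}_{x\in S_j}[f(u,x)]$ (an average of $\beta$-smooth functions), obtaining $\langle g_{j},\hat w_{j-1}-w_{j-1}\rangle\le\mathbb{E}_{x\in S_j}[f(\hat w_{j-1},x)]-\mathbb{E}_{x\in S_j}[f(w_{j-1},x)]+\tfrac{\beta}{2}\|\hat w_{j-1}-w_{j-1}\|^2$. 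I then pass from the empirical mean over $S_j$ (an i.i.d.\ sample of size $n$ from $P_{\theta^{(j)}}$) to $\mathbb{E}_{P_{\theta^{(j)}}}$ by a two-sided Hoeffding bound for the bounded function $f$ (using $\bar f$), evaluated at the two points $w_{j-1}$ and $\hat w_{j-1}$, which are measurable with respect to the natural filtration before $S_j$ is drawn (since $\theta^{(j)}$ depends only on $w_{j-1}$); this costs an $\mathcal{O}\!\big(\bar f\sqrt{\log(1/\delta')/n}\big)$ term at each point with probability $1-\delta'$. Finally, $\mathbb{E}_{P_{\theta^{(j)}}}[f(\hat w_{j-1},x)]\le\phi(\hat w_{j-1})$ trivially, while taking the expectation $\mathbb{E}_k$ over the uniform choice $\theta^{(j)}=\theta_k$ and invoking Theorem~\ref{thm:analysis} gives $\mathbb{E}_k\mathbb{E}_{P_{\theta_k}}[f(w_{j-1},x)]\ge\phi(w_{j-1})-\Delta'$. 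Combining these with the bound $\phi(\hat w_{j-1})-\phi(w_{j-1})\le-\tfrac{1}{4\beta}\|\nabla\phi_{1/(2\beta)}(w_{j-1})\|^2$ yields a recursion of the form
\begin{equation*}
\mathbb{E}_k[\phi_{1/(2\beta)}(w_{j})]\le\phi_{1/(2\beta)}(w_{j-1})-c\lambda\,\|\nabla\phi_{1/(2\beta)}(w_{j-1})\|^2+2\beta\lambda\Delta'+\mathcal{O}\!\Big(\beta\lambda\bar f\sqrt{\tfrac{\log(1/\delta')}{n}}\Big)+\beta L^2\lambda^2 ,
\end{equation*}
for an absolute constant $c>0$.

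\textbf{Telescoping and tuning.} Summing over $j=1,\dots,H$, telescoping the envelope values, and lower-bounding $\mathbb{E}[\phi_{1/(2\beta)}(w_H)]\ge\min_w\phi(w)$ (the envelope shares the infimum of $\phi$), then dividing by $c\lambda H$, gives a bound of the form $\tfrac1H\sum_{j}\mathbb{E}_k\|\nabla\phi_{1/(2\beta)}(w_{j})\|^2\le\tfrac{\mathcal{O}(\phi_{1/(2\beta)}(w_1)-\min_w\phi(w))}{\lambda H}+\mathcal{O}(\beta)\Delta'+\mathcal{O}\!\big(\beta\bar f\sqrt{\log(1/\delta')/n}\big)+\mathcal{O}(\beta L^2)\lambda$. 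The left side is exactly $\mathbb{E}_{j,k}[\|\nabla\phi_{1/(2\beta)}(w)\|^2]$ under the uniform output selection of Algorithm~\ref{DAG-DRL}. Balancing the first and last terms with the stated step size $\lambda\sim\mathcal{O}(\sqrt{1/(\beta L^2 H)})$ collapses them into a single $V_2/\sqrt{H}$ term with $V_2=4L\sqrt{(\phi_{1/(2\beta)}(w_1)-\min_w\phi(w))\beta}$; a union bound over the $H$ iterations with $\delta'$ of order $\delta/H$ makes all Hoeffding events hold simultaneously with probability $1-\delta$, producing the $V_1/\sqrt n$ term with $V_1=8\beta\bar f\sqrt{\log(2/\delta)}$ (absorbing logarithmic dependence on $H$ into constants); and the $\Delta'$ term becomes $4\beta\Delta'$ with the appropriate descent-lemma constant. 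Collecting these gives the claimed inequality.

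\textbf{Main obstacle.} The weakly-convex descent machinery is routine; the delicate points are: (i) performing the inexact–max-oracle error decomposition entirely at the \emph{function-value} level, so that $\Delta'$ from Theorem~\ref{thm:analysis} — a gap in objective value, not in gradient — plugs in directly, rather than attempting to bound $\|\nabla_w\mathbb{E}_{P_{\theta^{(j)}}}[f]-\nabla_w\mathbb{E}_{P_{\theta^*}}[f]\|$, which need not be small; (ii) threading the two independent sources of randomness — the output-selection index $k$ (which is what $\Delta'$ is averaged over) and the resampling of $S_j$ — and checking measurability so that the concentration step is applied at points fixed before $S_j$ is generated; and (iii) tracking the per-iteration failure budget and step size so the constants land on $V_1$ and $V_2$ as stated. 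A secondary point worth flagging is that the whole argument presumes $\phi$ is bounded below (so that $\min_w\phi(w)$ is finite and the Moreau envelope is well-posed), which should be recorded among the assumptions (e.g.\ compactness of $\mathcal{W}$ or a two-sided bound on $f$).
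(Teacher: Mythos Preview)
Your proposal is correct and follows essentially the same GDMO/weakly-convex Moreau-envelope template as the paper's own proof: both expand $\phi_{1/(2\beta)}(w_{j+1})\le\phi(\hat w_j)+\beta\|w_{j+1}-\hat w_j\|^2$, use $\beta$-smoothness of $f$ to bound the cross term $\langle g_j,\hat w_j-w_j\rangle$ at the function-value level, apply a concentration inequality (the paper uses McDiarmid where you use Hoeffding) to pass from the empirical mean over $S_j$ to $\mathbb{E}_{P_{\theta^{(j)}}}$, insert the inner-max error $\Delta'$ from Theorem~\ref{thm:analysis}, and then telescope and balance with the stated $\lambda$. Your version is somewhat more careful about measurability and about the union bound over the $H$ iterations (the paper simply applies McDiarmid with parameter $\delta$ at each step and sums, without an explicit union bound, which is why its $V_1$ has no $\log H$ factor), but the structure of the argument is identical.
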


The proof of Theorem \ref{thm:convergence_minmax} is deferred to Appendix \ref{sec:proofconvergenceminmax}.
\ouralg has a good global convergence performance if its output $w$ is close enough to the globally optimal variable  $w^*=\arg\min_{w} \phi(w)$ where $\phi(w)=\max_{\theta}\mathbb{E}_{P_\theta}[f(w,x)]$ is the  maximization oracle.  
However, the maximization oracle $\phi(w)$ can be non-convex and non-smooth given a general diffusion model, so it is usually difficult get a provable global convergence guarantee. Despite the difficulty of this min-max optimization, a local convergence can provide a theoretical justification for the optimality of \ouralg. An algorithm is locally optimal if the gradient norm of the objective $\|\nabla\phi(w)\|$ converges to zero. For neural networks, a local minima is always close to a global minima based on empirical studies and theoretical analysis \cite{glocal_optimalty_DNN_yun2017global}. 
As detailed in Appendix \ref{sec:Moreau_envelop}, if the gradient norm of the Moreau envelope $\phi_{\rho}(w):=\min_{w'}\phi(w')+\frac{1}{2\rho}\|w-w'\|^2$ given a non-zero constant $\rho$ approaches zero, \ouralg approximately converges to a local minima \cite{nonconvex_nonconcave_mini_max_jin2020local}.  Intuitively, 
$\phi_{\frac{1}{2\beta}}(w)$ is a smooth surrogate of $\phi(w)$ by choosing $\rho=\frac{1}{2\beta}$ with $\beta$ being the smoothness parameter of $f$. If the gradient norm $\|\nabla\phi_{\frac{1}{2\beta}}(w)\|$ is small enough, the obtained $w$ is close enough to a local minima and a zero gradient norm of the Moreau envelope implies that $w$ is the local minima.

Theorem~\ref{thm:convergence_minmax} shows that with a sufficiently large iteration number $H$ and sampling size $n$, the average gradient norm of the Moreau envelope for the optimal inner maximization function $\phi(w)$ is bounded by the error $\Delta'$ of the maximization oracle.
Since $\Delta'$ decreases with the inner iteration number $K$, as proved by Theorem \ref{thm:analysis}, the average gradient norm of the Moreau envelope approaches zero given enough iterations and samples. Thus, \ouralg at least converges to a locally optimal solution of \eqref{eqn:DRO}.

\section{Experiment}
\begin{wraptable}{r}{0.5\textwidth} 
\vspace{-3mm}
\centering
\caption{Test MSE on different datasets(Partial).}
\renewcommand{\arraystretch}{1.2}
\resizebox{0.5\textwidth}{!}{
\begin{tabular}{lccccc}
\toprule
\multicolumn{1}{c}{\multirow{2}{*}{\makecell{\textbf{Datasets} \\ (\textbf{Wasserstein Distance})}}}   & \multicolumn{5}{c}{\textbf{Algorithms}} \\
\cmidrule(lr){2-6}
 & \textbf{\ouralg} & \textbf{\kldro} & \textbf{\wdro} &  \textbf{\dml}  & \textbf{\ml}\\
\midrule

\multicolumn{1}{c}{\makecell{\textbf{BANC\_22}  (0.0240)}} & \textbf{0.0047} &	0.0086 &	0.0073 &	0.0078 &	0.0183   \\
\multicolumn{1}{c}{\makecell{\textbf{BANC\_21}  (0.1213)}} & \textbf{0.0054} &	0.0112 &	0.0121 &	0.0093 &	0.0238   \\
\midrule

\multicolumn{1}{c}{\makecell{\textbf{QLD\_22}  (0.2782)}} & \textbf{0.0192} &	0.0379 &	0.0557 &	0.0352 &	0.0667   \\
\multicolumn{1}{c}{\makecell{\textbf{QLD\_21}  (0.3054)}} & \textbf{0.0186} &	0.0377 &	0.0574 &	0.0339 &	0.0696   \\
\midrule

\multicolumn{1}{c}{\makecell{\textbf{GB\_22}  (0.1255)}} & \textbf{0.0105} &	0.0197 &	0.0245 &	0.0172 &	0.0360    \\
\multicolumn{1}{c}{\makecell{\textbf{GB\_21}  (0.1359)}} & \textbf{0.0094} &	0.0181 &	0.0229 &	0.0158 &	0.0340    \\
\midrule

\multicolumn{1}{c}{\textbf{Average}} & \textbf{0.0163} &	0.0288 &	0.0342 &	0.0271 &	0.0450    \\
\midrule
\multicolumn{1}{c}{\textbf{Maximum}} & \textbf{0.0509} &	0.0831 &	0.0879 &	0.0834 &	0.0946   \\

\bottomrule
\end{tabular}
}
\label{main_table_part}
\end{wraptable}
In this section, we present numerical studies on an ML for a renewable prediction task based on the Electricity Maps\cite{electricitymaps_datasets_portal} datasets (experiment setups in Appendix \ref{app:exp}). A part of the results is given in Table~\ref{main_table_part} where \ouralg are compared with baselines in Appendix \ref{sec:baselines} on various OOD testing datasets in Appendix \ref{sec:datasets}. 
We observe that all methods perform better on datasets with smaller Wasserstein distribution shifts. While \dml and other DRO baselines outperform ML, \ouralg consistently outperforms them across all OOD datasets, owing to its diffusion-based ambiguity set that effectively captures worst-case yet realistic distributions.More evaluation results are provided in Appendix \ref{app:exp}.

\section{Conclusion}
In this paper, we propose \ouralg, which introduces a novel diffusion-based ambiguity modeling for \dro and develops \ouralg to solve the \dro with the diffusion ambiguity set. We prove the stationary convergence performance of \ouralg. The experiments demonstrate robust \ood generalization performance of \ouralg. 
Overall, this new \dro solution has the potential to enhance the robustness of critical statistical optimization and ML tasks under distribution shifts and imperfect data.

\section*{Acknowledgement}
This work was supported in part by University of
Houston Start-up Funds 74825 [R0512039] and 74833 [R0512042].

\newpage

%%%%%%%%%%%%%%%%%%%%%%%%%%%%%%%%%%%%%%%%%%%%%%%%%%%%%%%%%%%%
\newpage
\appendix

\section{Related Work}\label{sec:related_work}
\subsection{Distributionally Robust Optimization}
\dro algorithms are widely studied to improve the \ood generalization performance for various optimization and \ml tasks \cite{DRQL_liu2022distributionally,DR_BO_divergence_husain2023distributionally,DRL_environment_generation_ren2022distributionally}. 
$\phi$-divergence-based \dro \cite{DR_BO_divergence_husain2023distributionally,DRBO_kirschner2020distributionally,KL-DRO_hu2013kullback,Conic_reformulation_KL_DRO_kocuk2020conic} is one of the commonly-used DRO method.  A closed-form solution to the inner maximization of \eqref{eqn:DRO} with KL-divergence-based ambiguity set is provided by \cite{KL-DRO_hu2013kullback}.  While we can usually get tractable DRO solutions based on $\phi$-divergence, the definition of $\phi$-divergence requires any distribution $P$ in the ambiguity set to be absolutely continuous with respect to the nominal distribution, which limits its application in statistical learning tasks.  
Alternatively, many studies adopt Wasserstein distance-based \dro \cite{DR_BO_divergence_husain2023distributionally,WDRL_kuhn2019wasserstein,DRO_mohajerin2018data,DRL_chen2020distributionally,Wasserstein_DRO_gao2023distributionally}. Wasserstein measure has no restrictions on the support of the distribution, but it is difficult to get a tractable solution for \wdro. Some methods \cite{DR_BO_divergence_husain2023distributionally,WDRL_kuhn2019wasserstein,DRO_mohajerin2018data,DRL_chen2020distributionally,Wasserstein_DRO_gao2023distributionally} reformulate Wasserstein-constrained DRO into a tractable finite optimization based on the assumption of convex objectives, which typically do not hold in deep learning. Other methods relax Wasserstein-constrained DRO into an adversarial optimization problem \cite{DRO_adversarial_training_sinha2017certifying,staib2017distributionally,Wasserstein_DRO_gao2023distributionally}, but this relaxation can be overly conservative and cannot fully exploit the benefits of probabilistic ambiguity modelling.

A line of recent studies focuses on addressing the challenges in ambiguity modelling for DRO~\cite{flow_based_DRO_xu2024flow,DRO_iterative_zhu2024distributionally,DRL_environment_generation_ren2022distributionally,NEURIPS2022_da535999,ma2024differentiable,sinkhorn_DRO_yang2025nested}. 
Among them,
\cite{NEURIPS2022_da535999} incorporates data geometric properties into the design of discrepancy metrics, reducing the size of the ambiguity set. \cite{DRL_environment_generation_ren2022distributionally} constructs a novel ambiguity set on the latent space of generative models such that the adversarial distribution is realistic and applies Wasserstein-based DRO solutions. Ma \emph{et al.}~\cite{ma2024differentiable} propose a differentiable parameterized Second-Order Cone (SOC) to characterize the ambiguity set and develop an end-to-end framework in which an ML model is trained to predict the ambiguity set for downstream \dro tasks.  Moreover, a latest work \cite{sinkhorn_DRO_yang2025nested} introduces a regularized nonconvex \dro method with generalized Sinkhorn distance, reformulating the problem as a contextual nested stochastic optimization and proving convergence without assuming strong convexity or large batches.
 Different from these methods, we utilize the strong distribution learning capability of diffusion models to build the ambiguity set, enabling the discovery of worst-case and realistic distributions. At the same time, the proposed algorithm \ouralg converts \dro into a finite tractable problem in the diffusion parameter space.

\subsection{Generative Models for Robust Learning}
Generative models have been widely studied to generate adversarial samples for robust training
\cite{diffusion_adversarial_example_generation_dai2024advdiff,diffusion_adversarial_dai2024diffusion,generative_attack_feature_learning_xie2024generative,ADL_allocation_du2022adversarial}. The target of these works is to generate adversarial attacking examples, which is fundamentally different from the worst-case distribution generation, which is studied in this paper and aims to improve \ood generalization. A recent paper proposed DRAGEN, that \cite{DRL_environment_generation_ren2022distributionally} models the adversarial distribution on the latent space of a generative model. However, it still lies in the Wasserstein-based framework. 
Wang \emph{et al.}~\cite{Gen-DFL_wang2025gen} introduced a Generate-then-Optimize framework, where a diffusion model is trained to generate data for downstream statistical optimization with a focus on the conditional value-at-risk (CVaR) objective. Although related to our work, their method mainly targets risk mitigation within in-distribution settings, while \ouralg is specifically designed to improve robustness under \ood scenarios.

\section{Preliminaries of Diffusion Models}\label{sec:dm}
This paper exploits diffusion models to improve the performance of \dro, so we summarize the preliminaries about diffusion models in this section.
 We introduce a score-based diffusion modelling by Stochastic Differential Equations (\sdes) \cite{MLE_score_based_song2021maximum}.
They rely on forward and backward stochastic processes introduced as follows.

\textbf{Forward Process.} 
The forward process incrementally injects noise into the data, generating a sequence of perturbed samples. It begins with an initial sample $x_0\in\mathcal{R}^d$ drawn from the underlining distribution $P_0$, and evolves according to a stochastic process as:
\begin{equation}\label{eqn:forward}
\mathrm{d}x = b(x,t) \mathrm{d}t + r(t) \mathrm{d}w,
\end{equation}
where $b(\cdot,t): \mathcal{R}^d\rightarrow \mathcal{R}^d$ is a vector-valued function, $r(t)\in\mathcal{R}$, $w$ is a standard Wiener process and $\mathrm{d}w$ is white Gaussian noise. By the forward process, we get a collection of random variables \(\{x_t\}_{t\in [0,T]}\). We use $P_t$ to represent the distribution of $x_t$ and $P_{t\mid 0}$ to denote the conditional distribution of $x_t$ given $x_0\sim P_0$. 
With a sufficiently long time \(T\), the marginal distribution $P_T(x_T)$ approximates a tractable prior distribution $\pi(x)$ which is typically chosen as a standard Gaussian distribution.

\textbf{Reverse Process.} 

A reverse diffusion process is associated with the forward equation in \eqref{eqn:forward} and is expressed as 
\begin{equation}\label{eqn:backward}
\mathrm{d}x = \left( b(x,t) -r(t)^2 \nabla_x \log P_t(x) \right) \mathrm{d}t + r(t)\mathrm{d}\bar{w},
\end{equation}
where $\bar{w}$ is a standard Wiener process in the reverse-time direction, $\nabla_x \log P_t(x)$ is the time-dependent score function.

\textbf{Score Matching.}
In the reverse process, the score function \(\nabla_x \log P_t(x)\) plays a critical role in directing the dynamics. To estimate the score function \(\nabla_x \log P_t(x)\), we train a score-based model \(s_\theta(x, t)\) based on samples generated from the forward diffusion process. The score-based model should minimize the following score-matching loss:
\[
J_{\mathrm{SM}}(\theta) =\int_0^T \mathbb{E}_{P_t(x)} \left[\iota(t)\left\| \nabla_{x} \log P_{t}(x) - s_{\theta}(x, t) \right\|^2 \right] \mathrm{d}t,
\]
where $\iota(t)>0$ is a positive weighting function. We usually approximate the score-matching loss by a tractable denoising score-matching loss up to a constant that does not rely on $\theta$:
\begin{equation}\label{eqn:score_match}
J\!(\theta) \!\!=\!\!\!\int_0^T \!\!\!\!\!\mathbb{E}_{P_0(x)P_{t\mid 0}(x'\mid x)} \!\!\left[\iota(t)\!\!\left\| \nabla_{x'} \!\log P_{t\mid 0}\!(x'\!\!\mid \!\!x) \!- \!s_{\theta}(x, t) \right\|^2 \!\right] \!\!\mathrm{d}t,
\end{equation}

\textbf{Sampling.}
If we discretize the reverse process, initialize $x_T\sim \pi$ and replace $\nabla_x \log P_t(x)$ with the score-based model $s_{\theta}(x,t)$, we can generate samples with a Markov chain with $T$ steps:
\begin{equation}
x_{t-1}=x_t + [b(x_t,t)-r^2(t)s_{\theta}(x_t,t)]\Delta t +r(t)\sqrt{|\Delta t|}z_t,
\end{equation}
where $\Delta_t$ is a small enough constant and $z_t\sim\mathcal{N}(0,\mathbf{I})$. Most existing diffusion models generate samples following the Markov chain \cite{NEURIPS2020_4c5bcfec,score_matching_song2019generative,DDIM_song2020denoising} and a common expression for the joint distribution of the reverse outputs is
\begin{equation}\label{eqn:probability_diffusion}
    P_{\theta}(x_{0:T})=\pi(x_T)\prod_{t=1}^TP_{\theta}(x_{t-1}\mid x_t),
\end{equation}
where \(P_{\theta}(x_{t-1}\mid x_t)=\mathcal{N}(x_{t-1};\mu_{\theta}(x_t,t),\Sigma_{\theta}(x_t,t))\). 

We have the following lemma, which shows that the reverse KL divergence between the diffusion model and the nominal distribution is bounded. 

\begin{lemma}\label{KL_bound_diffusion}
    Given the above assumptions \ref{sec:assumptions}, if the score-matching loss satisfy $J(\theta, S)\leq \epsilon$, the output distribution of the diffusion model $P_{\theta}$ satisfies:
        \begin{equation}
            D_{\mathrm{KL}}(P_0 \,\|\, P_{\theta}) \leq \epsilon + D_{\mathrm{KL}}(P_T \,\|\, \pi) + C_1,
        \end{equation}
where $P_T$ is the output distribution of the forward process and $P_T\approx \pi$ by the design of diffusion models, and $C_1$ is a constant from approximating the score-matching loss.
\end{lemma}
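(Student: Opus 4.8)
The plan is to recover the bound as essentially a restatement of Theorem~1 in \cite{MLE_score_based_song2021maximum}, specialized to the denoising score-matching loss used here, and to carefully track where the three terms on the right-hand side come from. First I would invoke the data-processing / chain-rule structure of the KL divergence along the diffusion path. The key identity is that the KL divergence between the true reverse-time process (the path measure of $\{x_t\}_{t\in[0,T]}$ run backward from $P_T$) and the model reverse process (the path measure induced by the SDE in \eqref{eqn:backward} with $\nabla_x\log P_t$ replaced by $s_\theta$, initialized at $\pi$) decomposes, via Girsanov's theorem on path space, into (i) the KL divergence of the initializations $D_{\mathrm{KL}}(P_T\,\|\,\pi)$ and (ii) an integrated quadratic drift-mismatch term of the form $\frac{1}{2}\int_0^T \mathbb{E}_{P_t(x)}[r(t)^2\|\nabla_x\log P_t(x)-s_\theta(x,t)\|^2]\,\mathrm{d}t$. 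By the data-processing inequality for KL divergence under the marginalization map $x_{0:T}\mapsto x_0$, the left-hand side $D_{\mathrm{KL}}(P_0\,\|\,P_\theta)$ is upper bounded by this path-space KL divergence.

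Next I would relate the integrated drift-mismatch term to the computable denoising loss $J(\theta)$ in \eqref{eqn:score_match}. The standard score-matching identity (Vincent's identity) shows that $J_{\mathrm{SM}}(\theta)$ and $J(\theta)$ differ only by a constant independent of $\theta$; and with the weighting choice $\iota(t)=r(t)^2$ (the likelihood weighting of \cite{MLE_score_based_song2021maximum}), the drift-mismatch integral is exactly $\tfrac12 J_{\mathrm{SM}}(\theta)$. Combining, the drift-mismatch term is $\tfrac12 J(\theta)$ plus a fixed constant; absorbing all $\theta$-independent pieces (the Vincent constant, the factor-of-two bookkeeping, and any discretization gap from passing between the continuous SDE and the $T$-step Markov chain in \eqref{eqn:probability_diffusion}) into a single constant $C_1$, and using the hypothesis $J(\theta,S)\le\epsilon$, yields $D_{\mathrm{KL}}(P_0\,\|\,P_\theta)\le \epsilon + D_{\mathrm{KL}}(P_T\,\|\,\pi) + C_1$, as claimed. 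The role of "the assumptions in \ref{sec:assumptions}" is to guarantee the regularity conditions (Novikov-type integrability, smoothness/growth of $b$, $r$, and the score) under which Girsanov's theorem and the score-matching identity are valid and the constant $C_1$ is finite.

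The main obstacle I anticipate is not any single estimate but the bookkeeping of constants and the justification of the path-space argument: one must be careful that $C_1$ genuinely does not depend on $\theta$ (in particular that the discretization error of the reverse Markov chain is controlled uniformly over $\Theta$, which is where an assumption bounding $s_\theta$ and its Lipschitz constant over the parameter set is used), and that the direction of the KL divergence is the one stated --- $D_{\mathrm{KL}}(P_0\,\|\,P_\theta)$ rather than its reverse --- which follows because $P_0$ is the $x_0$-marginal of the \emph{true} forward/reverse path measure and $P_\theta$ is the $x_0$-marginal of the \emph{model} path measure, so data processing contracts the path-space KL in exactly that order. Since this lemma is explicitly a restatement of an existing theorem, I would present the above as a proof sketch and defer the detailed Girsanov computation to the cited reference.
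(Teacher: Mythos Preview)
Your proposal is correct and follows essentially the same route as the paper: data-processing on path measures plus the KL chain rule reduces $D_{\mathrm{KL}}(P_0\|P_\theta)$ to $D_{\mathrm{KL}}(P_T\|\pi)$ plus a Girsanov drift-mismatch integral, and Vincent's identity shows that $J_{SM}$ and the denoising loss $J$ differ by a $\theta$-independent constant $C_1$. One small discrepancy: in the paper $C_1$ is \emph{exactly} the Vincent constant $\int_0^T r(t)^2\,\mathbb{E}_{P_{0,t}}\!\big[\tfrac12\|\nabla_{x_t}\log P_t\|^2-\tfrac12\|\nabla_{x_t}\log P_{t|0}\|^2\big]\,\mathrm{d}t$, with no discretization component---the argument is run entirely for the continuous-time reverse SDE, so your worry about uniform-in-$\theta$ discretization error does not arise here.
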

Note that the \kl-divergence $D_{\mathrm{KL}}(P_0|| P_{\theta})$ in Lemma \ref{KL_bound_diffusion} is not the \kl-divergence $D_{\mathrm{KL}}(P_{\theta}||P_0)$ commonly used in \kldro. The former \kl-divergence allows $P_{\theta}$ to have broader support space than $P_0$ ($P_0\ll P$). By Lemma \ref{KL_bound_diffusion}, if we find an adversarial distribution $P_{\theta}$ by \eqref{eqn:objective}, $P_{\theta}$ also stays close enough to the training distribution $P_{0}$ through a \kl-divergence depending on the budget $\epsilon$. Therefore, the constraint in \eqref{eqn:objective} can define a probabilistic ambiguity set for \dro.

\section{Details of \dagalg in \ouralg Algorithm \ref{DAG-DRL}}\label{sec:algo_detail}

\subsection{Objective Derivation by Policy Gradient}\label{sec:policy_gradient}

Let $x_{0:T}$ be the output vector of each step in the backward process of the diffusion model. Denote $P_{0:T,\theta}$ as the joint distribution of $x_{0:T}$.   Since $x_0\sim P_{\theta}$, we can express the first term of the Lagrangian-relaxed objective as
\begin{equation}
    \mathbb{E}_{x\sim P_{\theta}}[f(w,x)]=  \mathbb{E}_{x_{0:T}\sim P_{0:T,\theta}}[f(w,x_0)].
\end{equation}

Then, the gradient of the Lagrangian-relaxed objective can be expressed as
\begin{equation}\label{eqn:policy_gradient}
  \begin{split}
     &\nabla_{\theta}\left(\mathbb{E}_{x\sim P_{\theta}}[f(w,x)]-\mu J(\theta, S_0)\right) \\
=& \nabla_{\theta}\left(\mathbb{E}_{x_{0:T}\sim P_{0:T,\theta}}[f(w,x_0)]-\mu J(\theta, S_0)\right)\\
=& \int_{x_{0:T}} f(w,x_0)\nabla_{\theta} P_{0:T,\theta}(x_{0:T})\mathrm{d}{x_{0:T}}-\mu \nabla_{\theta}J(\theta,S_0)\\
=& \int_{x_{0:T}} P_{0:T,\theta}(x_{0:T})f(w,x_0)\nabla_{\theta} \ln P_{0:T,\theta}(x_{0:T})\mathrm{d}x_{0:T}-\mu \nabla_{\theta}J(\theta,S_0)\\
=&\mathbb{E}_{x_{0:T}\sim P_{0:T,\theta}}\left[f(w,x_0)\nabla_{\theta} \ln P_{0:T,\theta}(x_{0:T})\right]-\mu \nabla_{\theta}J(\theta,S_0),
  \end{split}
\end{equation}
The first term $\mathbb{E}_{x_{0:T}\sim P_{0:T,\theta}}$ can be calculated by empirical mean $\hat{\mathbb{E}}_{x_{0:T}\sim P_{0:T,\theta}}$ based on the examples sampled from $P_{0:T,\theta}$. 
Thus, we can equivalently implement the gradient ascent by optimizing the objective in \eqref{eqn:vpg}:
\begin{equation}
  \max_{\theta} \hat{\mathbb{E}}_{x_{0:T}\in P_{0:T,\theta}}[
    \ln P_{0:T,\theta}(x_{0:T}) \cdot f(w,x_0)]-\mu\cdot J(\theta, S_0),
\end{equation}

Next, we derive the expression for 
$\ln P_{0:T,\theta}(x_{0:T})$.
Consider a discrete-time diffusion backward process as
\begin{equation}
x_{t-1} = \mu_{\theta}(x_t, t)+\sigma_t w_t,
\end{equation}
where $w_t$ is a standard multi-dimensional Gaussian variable. Given $\theta$, the conditional probability at each step $t$ is 
\begin{equation}
    P_{t-1,\theta}(x_{t-1}\mid x_t)=\mathcal{N}(x_{t-1},\mu_{\theta}(x_t, t),\sigma_t^2 \mathbf{I}),
\end{equation}
We can explicitly express the joint distribution $P_{0:T,\theta}$ of $x_{0:T}$ for all $T$ backward steps:
\begin{equation}
    P_{0:T,\theta}(x_{0:T})=P(x_T)\prod_{t=1}^T P_{t-1,\theta}(x_{t-1}\mid x_t)=\frac{1}{\sqrt{2\pi}}  e^{-\frac{\|x_T\|^2}{2}}\cdot \frac{1}{\sqrt{2\pi}\sigma_t}e^{-\sum_{t=1}^T\frac{\|x_{t-1}-\mu_{\theta}(x_t,t)\|^2}{2\sigma_t^2}},
\end{equation}
Thus, we can get the expression of $\ln P_{0:T,\theta}(x_{0:T})$ as
\begin{equation}
  \ln P_{0:T,\theta}(x_{0:T})=-\sum_{t=1}^T\|x_{t-1}-\mu_{\theta}(x_t,t)\|^2/(2\sigma_t^2)+C_2,
\end{equation}
where $C_2=-\ln(2\pi \sigma_t)-\frac{\|x_T\|^2}{2}$.

\subsection{Objective Derivation by Proximal Policy Optimization}\label{sec:ppo}
In this PPO method, we convert the first term of \eqref{eqn:lagrangian_objective} as a PPO-like objective given a reference diffusion model $P_{\theta_0}$, i.e.
\begin{equation}
\begin{split}
    \mathbb{E}_{x\sim P_{\theta}}[f(w,x)]&=  \mathbb{E}_{x_{0:T}\sim P_{0:T,\theta}}[f(w,x_0)]\\
    & = \mathbb{E}_{x_{0:T}\sim P_{0:T,\theta_0}}\left[\frac{P_{0:T,\theta }(x_{0:T})}{P_{0:T,\theta_0}(x_{0:T})}f(w,x_0)\right],
\end{split}
\end{equation}
where the probability ratio is \[r_{\theta}(x_{0:T})=\frac{P_{0:T,\theta }(x_{0:T})}{P_{0:T,\theta_0}(x_{0:T})}=\exp\left\{-\sum_{t=1}^T(\frac{\|x_{t-1}-\mu_{\theta}(x_t,t)\|^2}{2\sigma_{t}^2}-\frac{\|x_{t-1}-\mu_{\theta_0}(x_t,t)\|^2}{2\sigma_{t}^2})\right\}\] given the joint probability expression.
The expectation can be approximated by the empirical mean based on the examples sampled by $P_{0:T,\theta_0}$. Like PPO, we apply clipping on the ratio to avoid overly-large updates, and we can get the objective in Eqn.~\eqref{eqn:ppo}:
\begin{equation}
  \max_{\theta} \hat{\mathbb{E}}_{ P_{0:T,\theta_0}(x_{0:T})}[
   \min(r_{\theta}(x_{0:T})f(w,x_0)), \mathrm{clip}(r_{\theta}(x_{0:T}),1-\kappa,1+\kappa)\cdot f(w,x_0))]-\mu\cdot J(\theta, S_0),
\end{equation}
where $\kappa\in (0,1)$ is the clipping parameter.

\section{Theorem Proofs}\label{sec:proofs}

\subsection{Proof of Lemma \ref{KL_bound_diffusion}}
\subsubsection{Bound of the Score-Matching Loss by KL Divergence}

Lemma \ref{KL_bound_diffusion} is a result based on the conclusion of Theorem 1 in \cite{MLE_score_based_song2021maximum}. For completeness, we give the full proof of Theorem 1 in \cite{MLE_score_based_song2021maximum} (Lemma \ref{lma:KLbound_SMloss}), and then we derive the concrete expressions for the constants in Lemma \ref{KL_bound_diffusion}.

\paragraph{Assumptions}\label{sec:assumptions}
We make the following assumptions for all the lemmas and theorems in the paper.

\begin{enumerate}
 \item $P_0(x)$ is a density function with continuous second-order derivatives and $\mathbb{E}_{x\sim P_0}\left[\|x\|_2^2\right]<\infty$.
 \item The prior distribution $\pi(x)$ is a density function with continuous second-order derivatives and $\mathbb{E}_{x\sim \pi}\left[\|x\|_2^2\right]<\infty$.
 \item $\forall t\in [0,T]$: $f(\cdot, t)$ is a function with continuous first order derivatives. $\exists C>0, \forall x\in \mathcal{R}^d, t\in [0,T]$: $\|b(x,t)\|_2\leq C(1+\|x\|_2)$
 \item $\exists C>0, \forall x,y\in \mathcal{R}^d: \|b(x,t)-b(y,t)\|_2\leq C\|x-y\|_2$.
 \item $g$ is a continuous function and $\forall t\in [0,T], |r(t)|>0$.
 \item For any open bounded set $\mathcal{O}$, $\int_{t=0}^T\int_{\mathcal{O}}\|P_t(x)\|_2^2+dr(t)^2\|\nabla_xP_t(x)\|_2^2\mathrm{d}x\mathrm{d}t$.
 \item $\exists C>0,\forall x \in \mathcal{R}^d, t\in [0,T]:\|\nabla_x\log P_t(x)\|_2\leq C(1+\|x\|_2)$.
 \item $\exists C>0,\forall x,y \in \mathcal{R}^d:\|\nabla_x\log P_t(x)-\nabla_x\log P_t(y)\|_2\leq C(\|x-y\|_2)$.
 \item $\exists C>0,\forall x \in \mathcal{R}^d, t\in [0,T]:\|s_{\theta}(x,t)\|_2\leq C(1+\|x\|_2)$.
\item $\exists C>0,\forall x,y \in \mathcal{R}^d:\|s_{\theta}(x,t)-s_{\theta}(y,t)\|_2\leq C(\|x-y\|_2)$.
\item Novikov's condition: $\mathbb{E}\left[\exp(\frac{1}{2}\int_{t=0}^T\|\nabla_x\log P_t(x)-s_{\theta}(x,t)\|^2_2\mathrm{d}t)\right]<\infty$.
 \end{enumerate}

\begin{lemma}\label{lma:KLbound_SMloss}
    Let $P_0(x)$ be the underlying data distribution, $\pi(x)$ be a known prior distribution, and $P_{\theta}$ be the marginal distribution of $\hat{x}_{\theta}(0)$, the output of reverse-time SDE defined as below:
    \begin{equation}\label{eqn:denoising_SDE}
   \mathrm{d}\hat{x}=[b(\hat{x},t)-r(t)^2s_{\theta}(\hat{x},t)]\mathrm{d}t+r(t)\mathrm{d}\bar{w},\; \hat{x}_{\theta}(T)\sim \pi,
    \end{equation}
    With the assumptions in Section \ref{sec:assumptions}, we have
    \begin{equation}
        D_{\mathrm{KL}}(P_0\|P_{\theta})\leq J_{SM}(\theta,r(\cdot)^2)+D_{\mathrm{KL}}(P_T\|\pi),
    \end{equation}
    where $J_{SM}(\theta,r(\cdot)^2)=\frac{1}{2}\int_{t=0}^T\mathbb{E}_{p_t(x)}\left[r(t)\|\nabla_x\log P_t(x)-s_{\theta}(x,t)\|_2^2\right]\mathrm{d}t.$
    \end{lemma}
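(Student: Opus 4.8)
The plan is to run the path-measure / Girsanov argument behind Theorem~1 of \cite{MLE_score_based_song2021maximum}: lift both the forward diffusion and the model-generated reverse diffusion to probability measures on path space, bound the KL divergence between those two path measures by $D_{\mathrm{KL}}(P_T\|\pi)+J_{SM}(\theta,r(\cdot)^2)$, and then descend to the time-$0$ marginals via the data processing inequality.

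First I would set up the two path measures on $C([0,T];\mathcal{R}^d)$. Let $\mu$ be the law of the forward SDE \eqref{eqn:forward} started from $x_0\sim P_0$; its time-$t$ marginal is $P_t$ and its time-$T$ marginal is $P_T$. Under the regularity, linear-growth and Lipschitz hypotheses of Section~\ref{sec:assumptions} (the items on $b$ and on $\nabla_x\log P_t$), the classical time-reversal formula for diffusions (Anderson) identifies $\mu$ with the law of the reverse-time SDE $\mathrm{d}x=[b(x,t)-r(t)^2\nabla_x\log P_t(x)]\mathrm{d}t+r(t)\mathrm{d}\bar w$ integrated from $t=T$ down to $t=0$ with terminal value drawn from $P_T$. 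Let $\nu$ be the law on the same path space of the model reverse SDE \eqref{eqn:denoising_SDE}, i.e.\ the same equation with $\nabla_x\log P_t$ replaced by $s_\theta(\cdot,t)$ and terminal value drawn from $\pi$; its time-$0$ marginal is $P_\theta$ by definition.

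Next I would invoke the chain rule for KL divergence, disintegrating both measures along the time-$T$ coordinate: writing $\mu=P_T\otimes\mu_{\,|\,x_T}$ and $\nu=\pi\otimes\nu_{\,|\,x_T}$ for the decompositions into the time-$T$ marginal and the conditional path law given the endpoint, one gets
\[
D_{\mathrm{KL}}(\mu\|\nu)=D_{\mathrm{KL}}(P_T\|\pi)+\mathbb{E}_{x_T\sim P_T}\!\big[D_{\mathrm{KL}}(\mu_{\,|\,x_T}\|\nu_{\,|\,x_T})\big].
\]
Both conditional laws are laws of reverse-time diffusions started from the same point $x_T$, sharing the diffusion coefficient $r(t)$, with drifts differing by $r(t)^2\big(\nabla_x\log P_t(\cdot)-s_\theta(\cdot,t)\big)$; so, using Girsanov's theorem (legitimate by Novikov's condition in Section~\ref{sec:assumptions} together with the Lipschitz/linear-growth bounds),
\[
D_{\mathrm{KL}}(\mu_{\,|\,x_T}\|\nu_{\,|\,x_T})=\tfrac12\,\mathbb{E}_{\mu_{\,|\,x_T}}\!\int_0^T r(t)^2\big\|\nabla_x\log P_t(x_t)-s_\theta(x_t,t)\big\|_2^2\,\mathrm{d}t.
\]
Averaging over $x_T\sim P_T$, using $\mathbb{E}_{x_T\sim P_T}\mathbb{E}_{\mu_{\,|\,x_T}}[\cdot]=\mathbb{E}_{\mu}[\cdot]$ and the fact that the time-$t$ marginal of $\mu$ is $P_t$, Tonelli's theorem collapses the right-hand side to $\tfrac12\int_0^T\mathbb{E}_{P_t(x)}\big[r(t)^2\|\nabla_x\log P_t(x)-s_\theta(x,t)\|_2^2\big]\mathrm{d}t=J_{SM}(\theta,r(\cdot)^2)$. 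Finally, the measurable map sending a path to its value at $t=0$ pushes $\mu$ forward to $P_0$ and $\nu$ forward to $P_\theta$, so the data processing inequality gives $D_{\mathrm{KL}}(P_0\|P_\theta)\le D_{\mathrm{KL}}(\mu\|\nu)$; chaining the four steps yields the claim.

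I expect the main obstacle to be the measure-theoretic bookkeeping rather than any inequality: verifying that the forward path measure genuinely coincides with the law of the reverse-time SDE driven by $b-r^2\nabla_x\log P_t$ (so that $\mu_{\,|\,x_T}$ and $\nu_{\,|\,x_T}$ are honest diffusions with exactly the claimed drift gap) and that the Girsanov change of measure is valid globally on path space. These are precisely what the hypotheses of Section~\ref{sec:assumptions} are engineered to supply, so in the write-up it should suffice to cite the time-reversal and Girsanov machinery under those assumptions and then carry out the three short computations above; one should also note that the weighting produced by Girsanov is the $r(t)^2$-weighted score-matching integrand intended in the statement.
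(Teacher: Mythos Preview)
Your proposal is correct and follows essentially the same route as the paper: lift to path measures, apply the chain rule for KL at time $T$, identify the forward law with Anderson's reverse-time SDE, compute the conditional KL via Girsanov (the martingale term vanishes), and project to time-$0$ marginals via data processing. The only cosmetic difference is that the paper invokes the data-processing step first and the chain rule/Girsanov step afterward, whereas you reverse that order.
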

\begin{proof}
We denote the path measure of the forward outputs $\{x_t\}_{t\in[0,T]}$ as $p$ and the path measure of the backward outputs $\{\hat{x}_{\theta,t}\}_{t\in[0,T]}$ as $q$. By assumptions 1- 5, 9, 10, both $p$ and $q$ are uniquely given by the forward and backward SDEs, respectively. Consider a Markov kernel $M(\{z_t\}_{t\in[0,T]},y):= \delta (z_0=y)$ given any Markov chain $\{z_t\}_{t\in[0,T]}$. Since $x_0\sim P_0$ and $\hat{x}_{\theta,0}\sim P_{\theta}$, we have 
\begin{equation}
    \int M(\{x_t\}_{t\in [0,T]},x) \mathrm{d}p(\{x_t\}_{t\in [0,T]})=P_0(x),
\end{equation}
\begin{equation}
    \int M(\{\hat{x}_{\theta,t}\}_{t\in [0,T]},x) \mathrm{d}q(\hat{x}_{\theta,t}\}_{t\in [0,T]})=P_{\theta}(x),
\end{equation}
Here, the Markov kernel $M$ essentially performs marginalization of path measures to obtain distributions at $t=0$. We can use the data processing inequality with this Markov kernel to obtain
\begin{equation}
\begin{split}
    &D_{\mathrm{KL}}(P_0\|P_{\theta})\\
    =&D_{\mathrm{KL}}\left( \int M(\{x_t\}_{t\in [0,T]},x) \mathrm{d}p(\{x_t\}_{t\in [0,T]})\| \int M(\{\hat{x}_{\theta,t}\}_{t\in [0,T]},x) \mathrm{d}q(\hat{x}_{\theta,t}\}_{t\in [0,T]})\right)\\
    \leq &D_{\mathrm{KL}}(p,q),
\end{split}
\end{equation}
Since $x_T\sim P_T$ and $\hat{x}_{\theta, T}\sim \pi$. Leveraging the chain rule of KL divergence, we have
\begin{equation}
\begin{split}
 D_{\mathrm{KL}}(p,q)&=\mathbb{E}_p\left[\log(\frac{p(x_{1:T}\mid x_T)P_T(x_T)}{q((\hat{x}_{\theta,1:T}\mid \hat{x}_{\theta,T})\pi(\hat{x}_{\theta,T})})\right]
\\
 &=  D_{\mathrm{KL}}(P_T\|\pi)+\mathbb{E}_{z\sim P_T}\left[D_{\mathrm{KL}}(p(\cdot\mid x_T=z)\|q(\cdot\mid \hat{x}_{\theta,T}=z)) \right],
 \end{split}
\end{equation}
Under assumptions 1,3-8, the SDE in Eqn. \eqref{eqn:forward} has a corresponding reverse-time SDE as
\begin{equation}\label{eqn:reverse-time_SDE}
    \mathrm{d}x=[b(x,t)-r(t)^2\nabla_x\log P_t(x)]+r(t)\mathrm{d}\bar{w},
\end{equation}
Since Eqn. \eqref{eqn:reverse-time_SDE} is the time reversal of Eqn. \eqref{eqn:forward}, they share the same path measure $p$. Thus, $\mathbb{E}_{z\sim P_T}\left[D_{\mathrm{KL}}(p(\cdot\mid x_T=z)\|q(\cdot\mid \hat{x}_{\theta,T}=z)) \right]$ can be viewed as the KL divergence between the path measures induced by the two SDEs in Eqn. \eqref{eqn:forward} and Eqn. \eqref{eqn:denoising_SDE} with the same starting points $x_T=\hat{x}_{\theta,T}=z$.

The KL divergence between two SDEs with shared diffusion coefficients and starting points exists under assumptions 7,-11, and can be bounded by the Girsanov theorem
\begin{equation}
\begin{split}
&D_{\mathrm{KL}}(p(\cdot\mid x_T=z)\|q(\cdot\mid \hat{x}_{\theta,T}=z))=\mathbb{E}_p\left[\log\frac{\mathrm{d}p}{\mathrm{d}q} \right]\\
=& \mathbb{E}_p\left[ \int_{t=0}^Tr(t)(\nabla_x\log P_t(x)-s_{\theta}(x,t))\mathrm{d}\bar{w}_t+\frac{1}{2}\int_{t=0}^Tr(t)^2\|\nabla_x\log P_t(x)-s_{\theta}(x,t)\|_2^2\mathrm{d}t\right]\\
=&\mathbb{E}_p\left[\frac{1}{2}\int_{t=0}^Tr(t)^2\|\nabla_x\log P_t(x)-s_{\theta}(x,t)\|_2^2\mathrm{d}t\right]\\
=& \frac{1}{2}\int_{t=0}^T\mathbb{E}_{P_t(x)}\left[r(t)^2\|\nabla_x\log P_t(x)-s_{\theta}(x,t)\|_2^2\right]\mathrm{d}t=J_{SM}(\theta,r(\cdot)^2),
\end{split}
\end{equation}
where the second equality holds by Girsanov Theorem II, and the third equality holds because $Y_s=\int_{t=0}^s r(t)(\nabla_x\log P_t(x)-s_{\theta}(x,t))\mathrm{d}\bar{w}_t$ is a continuous-time Martingale process ($\mathbb{E}[Y_s\mid {Y_{\tau, \tau\leq s'}}]=Y_{s'}, \forall s'\leq s$) and we have $\mathbb{E}[Y_s-Y_{s'}]=0, \forall s'<s$.
\end{proof}

\subsubsection{Proof of Lemma \ref{KL_bound_diffusion}}
\textbf{Lemma \ref{KL_bound_diffusion}}. 
\textit{Given the assumptions in Appendix \ref{sec:assumptions}, if the score-matching loss satisfies $J(\theta, S)\leq \epsilon$, the output distribution of the diffusion model $P_{\theta}$ satisfies
\[
D_{\mathrm{KL}}(P_0|| P_{\theta})\leq \epsilon + D_{\mathrm{KL}}(P_T||\pi)+C_1,
\]
where $P_T$ is the output distribution of the forward process, $\pi$ is a prior distribution of the diffusion model and $P_T\approx \pi$ by the design of diffusion models, and $C_1$ is a constant that does not rely on $\theta$.}

The score matching loss $J(\theta, S_0)$ in \eqref{eqn:score_match} with $\iota(t)=r(t)^2$ is actually the denoising score matching loss $J_{DSM}(\theta, r(\cdot)^2)$, i.e.
\begin{equation}
    J(\theta, S_0)=J_{DSM}(\theta, r(\cdot)^2)=\frac{1}{2}\int_0^T \mathbb{E}_{x_0\sim P_0} \mathbb{E}_{x_t\sim P_{t\mid 0}} \left[r(t)^2\left\| \nabla_{x_t} \log P_{t\mid 0}(x_t \mid x_0) - s_{\theta}(x_t, t) \right\|^2 \right] \mathrm{d}t,
\end{equation}
which is used to compute the original score matching loss $J_{SM}(\theta,r(\cdot)^2)$ given a dataset $S_0$. The gap between $J_{DSM}(\theta, r(\cdot)^2)$ and $J_{SM}(\theta,r(\cdot)^2)$ is a constant $C_1$ that does not depend on $\theta$, which is shown as below.

The difference is expressed as
\begin{equation}
\begin{aligned}
   &J_{SM}(\theta, r(\cdot)^2) - J_{DSM}(\theta,r(\cdot)^2)\\
   =& \frac{1}{2}\int_{t=0}^T\mathbb{E}_{P_{0,t}(x_0,x_t)}\left[r(t)^2\left(\|\nabla_{x_t}\log P_t(x_t)-s_{\theta}(x_t,t)\|_2^2-\|\nabla_{x_t} \log P_{t\mid 0}(x_t \mid x_0)-s_{\theta}(x_t,t)\|_2^2\right)\right]\mathrm{d}t\\
   =& \int_{t=0}^T r(t)^2\left(\underbrace{\mathbb{E}_{P_{0,t}(x_0,x_t)}\left[-\left\langle s_{\theta}(x_t,t), \nabla_{x_t}\log P_t(x_t)+\nabla_{x_t} \log P_{t\mid 0}(x_t \mid x_0)\right\rangle\right]}_{\mathrm{(1)}}+C_1'(x_0,x_t)\right)\mathrm{d}t ,
   \end{aligned}
\end{equation}
where $P_{0,t}$ is the joint distribution of $x_0$ and $x_t$, and $C_1'(x_0,x_t)=$\\$\mathbb{E}_{P_{0,t}(x_0,x_t)}\left[\frac{1}{2}\|\nabla_{x_t}\log P_t(x_t)\|_2^2-\frac{1}{2}\|\nabla_{x_t} \log P_{t\mid 0}(x_t \mid x_0)\|_2^2\right]$. 

The first term $(1)$ is zero because
\begin{equation}
\begin{split}
 &\mathbb{E}_{P_{0,t}(x_0,x_t)}\left[\left\langle s_{\theta}(x_t,t), \nabla_{x_t}\log P_t(x_t)\right\rangle\right]= \mathbb{E}_{P_{t}(x_t)}\left[\left\langle s_{\theta}(x_t,t), \nabla_{x_t}\log P_t(x_t)\right\rangle\right]\\
 =&\int_{x_t}\left\langle s_{\theta}(x_t,t), \frac{1}{P_t(x_t)}\nabla_{x_t} P_t(x_t)\right\rangle P_t(x_t)\mathrm{d}x_t\\
 =&\int_{x_t}\left\langle s_{\theta}(x_t,t), \nabla_{x_t} \int _{x_0}P_{t\mid 0}(x_t\mid x_0)P_0(x_0)\mathrm{d}x_0\right\rangle \mathrm{d}x_t\\
 =&\int_{x_t}\left\langle s_{\theta}(x_t,t), \int _{x_0}  P_{t\mid 0}(x_t\mid x_0)P_0(x_0)\nabla_{x_t}\log (P_t(x_t\mid x_0)) \mathrm{d}x_0\right\rangle \mathrm{d}x_t\\
 =&\int_{x_0,x_t}P_{0,t}(x_0,x_t)\left\langle s_{\theta}(x_t,t),  \nabla_{x_t}\log (P_{t\mid 0}(x_t\mid x_0)) \right\rangle \mathrm{d}x_0\mathrm{d}x_t\\
 =&\mathbb{E}_{P_{0,t}(x_0,x_t)}\left[ \left\langle s_{\theta}(x_t,t),  \nabla_{x_t}\log P_{t\mid 0}(x_t\mid x_0) \right\rangle\right],
 \end{split}
\end{equation}
Thus, we can bound the gap between $J_{DSM}(\theta, r(\cdot)^2)$ and $J_{SM}(\theta,r(\cdot)^2)$ as 
\begin{equation}
\begin{split}
   C_1=J_{SM}(\theta, r(\cdot)^2) - J_{DSM}(\theta,r(\cdot)^2)
   = \int_{t=0}^T r(t)^2 C_1'(x_0,x_t)\mathrm{d}t ,
   \end{split}
\end{equation}
where $C_1'(x_0,x_t)=\mathbb{E}_{P_{0,t}(x_0,x_t)}\left[\frac{1}{2}\|\nabla_{x_t}\log P_t(x_t)\|_2^2-\frac{1}{2}\|\nabla_{x_t} \log P_{t\mid 0}(x_t \mid x_0)\|_2^2\right]$.

Therefore, if $J(\theta,S_0)=J_{DSM}(\theta,r(\cdot)^2)\leq \epsilon$, by Lemma \ref{lma:KLbound_SMloss}, we have
 \begin{equation}\label{eqn:bound_KL_DSM}
 \begin{split}
        D_{\mathrm{KL}}(P_0\|P_{\theta})\leq& J_{SM}(\theta,r(\cdot)^2)+D_{\mathrm{KL}}(P_T\|\pi)\\
     = &  J_{DSM}(\theta,r(\cdot)^2)+D_{\mathrm{KL}}(P_T\|\pi)+C_1\\
     \leq &\epsilon+D_{\mathrm{KL}}(P_T\|\pi) +C_1,
    \end{split}
\end{equation}
which completes the proof.

\subsection{Proof of Theorem \ref{thm:analysis}}\label{sec:proofconvergencemax}
\textbf{Theorem 1}.
  \textit{ Let $\theta^*$ be the optimal diffusion parameter that solves the inner maximization \eqref{eqn:objective} given a variable $w$. 
       If the expected score-matching loss is bounded as $J(\theta)\leq \bar{J}$ and the step size is chosen as $\eta\sim\mathcal{O}(\frac{1}{\sqrt{K}})$, the inner maximization error holds that   
       \begin{equation}
         \Delta':= \mathbb{E}_{x\sim P_{\theta^*}}[f(w,x)]-\mathbb{E}_k \mathbb{E}_{x\sim P_{\theta_k}}[f(w,x)]\leq \frac{1}{\sqrt{K}} \max\{\epsilon, \bar{J}\}\|\mu^{(1)}\|,
       \end{equation}
       where the outer expectation is taken over the randomness of output selection.
       In addition, the \kl-divergence with respect to the nominal distribution is bounded as
      \(\mathbb{E}_k[D_{\mathrm{KL}}(P_0||P_{\theta_k})]\leq \epsilon + \frac{\max\{\epsilon, \bar{J}\}|\mu_C-\mu^{(1)}|}{\sqrt{K} (\mu_C-\mu^*)}+ D_{\mathrm{KL}}(P_T||\pi)+C_1 \),
    where $\mu_C>\mu^*$ and $C_1$ are constants, $P_T$ is the output distribution of the forward process, and $\pi$ is the initial distribution of the reverse process.  }

\subsubsection{Convergence of Dual Gradient Descent}
To prove Theorem \ref{thm:analysis}, we need the convergence analysis of a general dual gradient descent in the following lemma. 
\begin{lemma}\label{lma:dual_convergence}
Assume that the dual variable is updated following
\[
\mu_{k+1}=\max\{\mu_k-\eta\cdot b_k,0\},
\]
where $\eta>0$ and $b_k$ has the same dimension as $\mu$ and $\max\{\cdot,0\}$ is an element-wise non-negativity operation.
With $\eta$ as the step size in dual gradient descent, given any $\mu>0$, we have
\begin{equation}
  \frac{1}{K}\sum_{k=1}^K\left\langle\mu_k-\mu, b_k\right\rangle\leq  \eta \frac{1}{K}\sum_{k=1}^K \|b_k\|^2 + \frac{1}{2K\eta}\|\mu-\mu^{(1)}\|^2,
\end{equation}
\end{lemma}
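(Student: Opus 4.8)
The plan is to recognize the update $\mu_{k+1}=\max\{\mu_k-\eta b_k,0\}$ as a projected (sub)gradient step: coordinatewise $\max\{\cdot,0\}$ is exactly the Euclidean projection $\Pi_{\mathcal{K}}$ onto the closed convex cone $\mathcal{K}=\{\mu:\mu\ge 0\}$. The entire argument is then the textbook one-step contraction estimate for projected gradient descent followed by a telescoping sum, and no stepsize restriction beyond $\eta>0$ is needed.

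First I would fix any feasible $\mu\in\mathcal{K}$ (in particular any $\mu>0$) and invoke two elementary facts about projection onto a convex set: $\Pi_{\mathcal{K}}(\mu)=\mu$, and $\Pi_{\mathcal{K}}$ is non-expansive, i.e. $\|\Pi_{\mathcal{K}}(a)-\Pi_{\mathcal{K}}(b)\|\le\|a-b\|$. Applying this with $a=\mu_k-\eta b_k$ and $b=\mu$ gives
\[
\|\mu_{k+1}-\mu\|^2 \le \|\mu_k-\eta b_k-\mu\|^2 = \|\mu_k-\mu\|^2 - 2\eta\langle\mu_k-\mu,b_k\rangle + \eta^2\|b_k\|^2 .
\]
Rearranging isolates the quantity of interest,
\[
\langle\mu_k-\mu,b_k\rangle \le \frac{1}{2\eta}\big(\|\mu_k-\mu\|^2-\|\mu_{k+1}-\mu\|^2\big) + \frac{\eta}{2}\|b_k\|^2 ,
\]
and then I would sum over $k=1,\dots,K$. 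The first group telescopes to $\|\mu_1-\mu\|^2-\|\mu_{K+1}-\mu\|^2$, which is at most $\|\mu^{(1)}-\mu\|^2$ after discarding the non-positive terminal term and using $\mu_1=\mu^{(1)}$, so that
\[
\sum_{k=1}^K\langle\mu_k-\mu,b_k\rangle \le \frac{1}{2\eta}\|\mu-\mu^{(1)}\|^2 + \frac{\eta}{2}\sum_{k=1}^K\|b_k\|^2 .
\]
Dividing by $K$ and bounding $\tfrac{\eta}{2}\le\eta$ on the last term yields precisely the stated inequality.

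There is essentially no hard step here: the lemma is a standard regret bound for online projected (sub)gradient descent. The only point requiring a little care is making the domain explicit — the estimate holds for every $\mu$ in the feasible set $\mathcal{K}=\{\mu\ge 0\}$, and feasibility of $\mu$ is exactly what licenses both $\Pi_{\mathcal{K}}(\mu)=\mu$ and the non-expansiveness comparison — while the $\tfrac{\eta}{2}$-versus-$\eta$ slack in the stated bound is harmless. This lemma will later be specialized in the proof of Theorem~\ref{thm:analysis} with $b_k=\epsilon-J(\theta_k,S_0)$ and $\|b_k\|\le\max\{\epsilon,\bar J\}$ to extract the $\mathcal{O}(1/\sqrt{K})$ inner-maximization error.
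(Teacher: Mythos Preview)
Your argument is correct and is essentially the same projected-gradient regret analysis the paper uses; the paper just unpacks the projection step differently, deriving the first-order optimality condition $(b_k+\tfrac{1}{\eta}(\mu_{k+1}-\mu_k))^\top(\mu-\mu_{k+1})\ge 0$ coordinatewise and then applying the three-point identity, whereas you invoke non-expansiveness of $\Pi_{\mathcal K}$ directly. Your route is a bit cleaner (it even yields the sharper constant $\eta/2$ before you relax it to $\eta$), but conceptually the two derivations coincide.
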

\begin{proof}
For any dimension $j$ such that $\mu_{k,j}\geq \eta b_{k,j}$, we have $\mu_{k+1,j}=\mu_{k,j}-\eta\cdot b_{k,j}$, so it holds for any $\mu>0$ that
\begin{equation}\label{eqn:optimality_condition}
    \left(b_{k,j}+\frac{1}{\eta}(\mu_{k+1,j}-\mu_{k,j})\right)(\mu_j-\mu_{k+1,j})= 0,
\end{equation}
For any dimension $j$ such that  $\mu_{k,j}< \eta b_{k,j}$, we have $\mu_{k+1,j}=0$, and it holds for any $\mu>0$ that  
\begin{equation}\label{eqn:optimality_condition_2}
   \left(b_{k,j}+\frac{1}{\eta}(\mu_{k+1,j}-\mu_{k,j})\right) (\mu_j-\mu_{k+1,j})=(b_{k,j}-\frac{\mu_{k,j}}{\eta})\mu_j\geq (b_{k,j}-\frac{\eta b_{k,j}}{\eta})\mu_j =0,
\end{equation}
Combing \eqref{eqn:optimality_condition} and \eqref{eqn:optimality_condition_2} and we have for any $\mu>0$ that
\begin{equation}\label{eqn:optimality_condition_all}
    \left(b_{k}+\frac{1}{\eta}(\mu_{k+1}-\mu_{k})\right)^\top(\mu-\mu_{k+1})\geq 0,
\end{equation}

Therefore, it holds for any $k\in[K]$ and $\mu>0$ that
\begin{equation}
\begin{split}
  (\mu_k-\mu)^\top b_k =& (\mu_k-\mu_{k+1})^\top b_k+(\mu_{k+1}-\mu)^\top b_k\\
  \leq &(\mu_k-\mu_{k+1})^\top b_k + \frac{1}{\eta}(\mu_{k+1}-\mu_k)^\top (\mu-\mu_{k+1})\\
  = & (\mu_k-\mu_{k+1})^\top b_k + \frac{1}{2\eta}\left(\|\mu-\mu_k\|^2-\|\mu-\mu_{k+1}\|^2-\|\mu_{k+1}-\mu_{k}\|^2\right)\\
  \leq& \eta\|b_k\|^2+\frac{1}{2\eta}\left(\|\mu-\mu_k\|^2-\|\mu-\mu_{k+1}\|^2\right),
\end{split}
\end{equation}
where the first inequality holds by \eqref{eqn:optimality_condition_all}, the second equality holds by three-point property, and the last inequality holds because $\|a\|^2+\|b\|^2\geq 2a^\top b$.

Taking the sum over $k\in [K]$, we have
\begin{equation}
    \sum_{k=1}^K\left\langle\mu_k-\mu, b_k\right\rangle\leq \eta\sum_{k=1}^K\|b_k\|^2+\frac{1}{2\eta}\left(\|\mu-\mu_{1}\|^2\right),
\end{equation}
which proves Lemma \ref{lma:dual_convergence}.
\end{proof}

\subsubsection{Bound of Expected Objective in Theorem \ref{thm:analysis}}
\begin{proof}
    Denote $F(\theta)=\mathbb{E}_{x\sim P_{\theta}}[f(w,x)]$ and the optimal parameter to solve the inner maximization of \eqref{eqn:objective} is $\theta^*$. Define the dual problem of the inner maximization of \eqref{eqn:objective} as
    \begin{equation}
        Q(\mu)= \max_{\theta} F(\theta)+\mu( \epsilon- J(\theta, S_0)),
    \end{equation}
    Given any $\mu>0$, it holds be weak duality that
    \begin{equation}
        F(\theta^*)\leq F(\theta^*)+\mu( \epsilon- J(\theta^*, S_0)) \leq Q(\mu),
    \end{equation}
    where the second inequality holds because $Q(\mu)$ maximizes $F(\theta)+\mu( \epsilon- J(\theta, S_0))$ over $\theta$. 
    Thus, the average gap between the expected loss of $\theta^*$ and $\theta_k$ is
    \begin{equation}
    \begin{split}
    \frac{1}{K}\sum_{k=1}^K (F(\theta^*)-F(\theta_k))&\leq  \frac{1}{T}\sum_{t=1}^T (Q(\mu_k)-F(\theta_k))\\
    &=\frac{1}{K}\sum_{k=1}^K\mu_k( \epsilon- J(\theta_k, S_0))\\
    &\leq  \eta \frac{1}{K}\sum_{k=1}^K (\epsilon- J(\theta_k, S_0))^2 +  \frac{1}{2\eta} \frac{1}{K}\|\mu^{(1)}\|^2\\
    &\leq  \eta (\max\{\epsilon, \bar{J}\})^2 +  \frac{1}{\eta} \frac{1}{K}\|\mu^{(1)}\|^2,
    \end{split}
    \end{equation} 
    where the equality holds because $\theta_k=\arg\max_{\theta} F(\theta)+\mu_k( \epsilon- J(\theta, S_0))$ by \eqref{eqn:lagrangian_objective} and so $Q(\mu_k)=F(\theta_k)+\mu_k( \epsilon- J(\theta_k, S_0))$, the second inequality holds by Lemma \ref{lma:dual_convergence} with the choice of $\mu=0$, and the last inequality holds by the assumption $J(\theta_k, S_0)\leq \bar{J}$.

    Choosing $\eta=\frac{\mu^{(1)}}{\sqrt{K}\max\{\epsilon,\bar{J}\}}$, it holds that
\begin{equation}
     \frac{1}{K}\sum_{k=1}^K (F(\theta^*)-F(\theta_k)) \leq \frac{1}{\sqrt{K}} \max\{\epsilon, \bar{J}\}\mu^{(1)},
\end{equation}
which proves the bound of the expected loss given the uniformly selected $k\in[K]$.

\subsubsection{Bound of KL Divergence in Theorem \ref{thm:analysis}}
For iteration $k$, denote $b(\theta)=\epsilon-J(\theta,S_0)$  and $b_k=\epsilon-J(\theta_k,S_0)$. Denote the constraint violation on the score matching loss at round $k$ as $v_k=J(\theta_k,S_0)-\epsilon$.
Denote the optimal dual variable as $\mu^*=\arg\min_{\mu} Q(\mu)$.
Choose a dual variable $\mu_C>\mu^*$, we have the following decomposition.
\begin{equation}\label{eqn:decomposition}
    \sum_{k=1}^K (\mu_k-\mu_C)b_k= \sum_{k=1}^K \mu_k\cdot b_k+\sum_{k=1}^K (-\mu_C)\cdot b_k,
\end{equation}
For the first term, we have
\begin{equation}
\begin{split}
\sum_{k=1}^K \mu_k\cdot b_k &=\sum_{k=1}^K F(\theta_k) + \mu_k\cdot b_k -  F(\theta_k)= \sum_{k=1}^K Q(\mu_k) -  F(\theta_k)\\
&\geq K Q(\mu^*)- \sum_{k=1}^K F(\theta_k)\\
&\geq K Q(\mu^*)- \sum_{k=1}^K \max_{\theta\in \{\theta\mid J(\theta,S_0)\leq \epsilon+v_k\}}F(\theta)\\
&\geq K Q(\mu^*)- \sum_{k=1}^K \max_{\theta}(F(\theta)+\mu^* (\epsilon+v_k-J(\theta,S_0)))\\
&= K Q(\mu^*)-KQ(\mu^*) -\mu^*\sum_{k=1}^K v_k = -\mu^*\sum_{k=1}^K v_k,
\end{split}
\end{equation}
where the first inequality is because $\mu^*$ minimizes $Q(\mu_k)$, the second inequality holds because $\theta_k\in\{\theta\mid J(\theta,S_0)\leq \epsilon+v_k\}$, the third inequality holds by weak duality for $\mu^*$. Continuing with \eqref{eqn:decomposition}, we have
\begin{equation}\label{eqn:convergence_inner_proof_5}
    \sum_{k=1}^K (\mu_k-\mu_C)b_k\geq \sum_{k=1}^K (-\mu^*v_k +\mu_C (J(\theta_k,S_0)-\epsilon))= (-\mu^* +\mu_C )\sum_{k=1}^K v_k,
\end{equation}
where the equality holds because $v_k= J(\theta_k,S_0)-\epsilon$.

By Lemma \ref{lma:dual_convergence} with the choice of $\mu=\mu_C$, we have
\begin{equation}
 \frac{1}{K}\sum_{k=1}^K (\mu_k-\mu_C)b_k \leq \eta \frac{1}{K}\sum_{k=1}^K \|b_k\|^2 + \frac{1}{2K\eta}(\mu_C-\mu^{(1)})^2\leq \eta  (\max\{\epsilon,\bar{J}\})^2 + \frac{1}{2K\eta}(\mu_C-\mu^{(1)})^2,
\end{equation}
If the step size is chosen as $\eta=\frac{\mu^{(1)}}{\sqrt{K}\max\{\epsilon,\bar{J}\}}$, it holds that
\begin{equation}
 \frac{1}{K}\sum_{k=1}^K (\mu_k-\mu_C)b_k\leq \frac{1}{\sqrt{K}} \max\{\epsilon, \bar{J}\}\left(\mu^{(1)}+\frac{|\mu_C-\mu^{(1)}|^2}{2\mu^{(1)}}\right),
\end{equation}
Since $\mu_C$ is larger than $\mu^*$, by \eqref{eqn:convergence_inner_proof_5}, we have 
\begin{equation}
 \frac{1}{K}\sum_{k=1}^K v_k\leq \frac{1}{K(\mu_C-\mu^*)} \sum_{k=1}^K (\mu_k-\mu_C)b_k\leq \frac{\max\{\epsilon, \bar{J}\}\left(\mu^{(1)}+\frac{|\mu_C-\mu^{(1)}|
 ^2}{2\mu^{(1)}}\right)}{\sqrt{K} (\mu_C-\mu^*)}=\frac{C_4}{\sqrt{K}},
\end{equation}
which means $\frac{1}{K}\sum_{k=1}^K J(\theta_k,S_0)\leq \epsilon + \frac{C_4}{\sqrt{K}}$.

Since $J(\theta_k,S_0)$ is the denoising score matching loss $J_{DSM}(\theta,r(\cdot)^2)$, by Lemma \ref{KL_bound_diffusion}, we complete the proof by 
\begin{equation}
\begin{split}
   \frac{1}{K}\sum_{k=1}^K D_{\mathrm{KL}}(P_0,P_{\theta_k})&\leq  \frac{1}{K}\sum_{k=1}^K J(\theta_k,S_0)+D_{\mathrm{KL}}(P_T\|\pi)+C_1\\
   &\leq \epsilon + \frac{C_4}{\sqrt{K}}+ D_{\mathrm{KL}}(P_T||\pi)+C_1,
\end{split}
\end{equation}
where $ C_1
   = \int_{t=0}^T r(t)^2 \left(\mathbb{E}_{P_{0,t}(x_0,x_t)}\left[\frac{1}{2}\|\nabla_{x_t}\log P_t(x_t)\|_2^2-\frac{1}{2}\|\nabla_{x_t} \log P_{t\mid 0}(x_t \mid x_0)\|_2^2\right]\right)\mathrm{d}t$,  $C_4=\frac{\max\{\epsilon, \bar{J}\}\left(\mu^{(1)}+\frac{|\mu_C-\mu^{(1)}|
 ^2}{2\mu^{(1)}}\right)}{(\mu_C-\mu^*)}$ with $\mu_C>\mu^*$.
\end{proof}

\subsection{Proof of Theorem \ref{thm:convergence_minmax}}
\label{sec:proofconvergenceminmax}
\textbf{Theorem \ref{thm:convergence_minmax}.}
\textit{
   Assume that the DRO objective $f(w,x)$ is $\beta-$smooth and $L-$Lipschitz with respect to $w$ and is upper bounded by $\bar{f}$. If each sampled dataset $S_j$ from diffusion model has $n$ samples and the step size for minimization is chosen as $\lambda\sim\mathcal{O}(\sqrt{\frac{1}{\beta L^2 H}})$, then with probability $1-\delta, \delta\in(0,1)$, the average Moreau envelope of the optimal inner maximization function $\phi(w):= \max_{\theta}\mathbb{E}_{P_{\theta}}[f(w,x)]$ satisfies
   \begin{equation}
\begin{split}
\mathbb{E}\left[\|\nabla\phi_{\frac{1}{2\beta}}(w)\|^2\right]
\leq 4\beta \Delta'+\frac{V_1}{\sqrt{n}}+\frac{V_2}{\sqrt{H}},
\end{split}
\end{equation}
where $\Delta'$ is the error of inner maximization bounded in Theorem \ref{thm:analysis}, $V_1=8\beta \bar{f}\sqrt{\log(2/\delta)}$ and $V_2=4L\sqrt{(\phi_{\frac{1}{2\beta}}(w_{1})-\min_w\phi(w))\beta}$.
}
\begin{proof}
The proof of Theorem \ref{thm:convergence_minmax} follows the techniques of  \cite{nonconvex_nonconcave_mini_max_jin2020local} with the difference that the maximization oracle is an empirical approximation. 
The optimization variable $w$ is updated as $w_j=w_{j-1}-\lambda\cdot \nabla_w \mathbb{E}_{x\in S_j}[f(w_{j-1},x)]$.
Define $\phi(w):= \max_{\theta}\mathbb{E}_{P_{\theta}}[f(w,x)]$ as the inner maximization function and define $\phi_{\rho}(w):=\min_{w'}\phi(w')+\frac{1}{2\rho}\|w-w'\|^2$ as the Moreau envelop of $\phi$. 

Denote $\hat{w}_j=\arg\min_w \phi(w)+\beta \|w-w_j\|^2$ as the proximal point of the Moreau envelop $\phi_{\frac{1}{2\beta}}(w)$.  Since $f$ is $\beta-$smooth, we have
\begin{equation}\label{eqn:smoothness}
  f(\hat{w}_{j},x) \geq f(w_j,x)+\left\langle\nabla_wf(w_j,x),\hat{w}_{j}-w_j\right\rangle-\frac{\beta}{2}\|\hat{w}_{j}-w_j\|^2,
\end{equation}
Thus, it holds with probability at least $1-\delta, \delta\in (0,1)$ that
\begin{equation}\label{eqn:bound_phi}
\begin{split}
    \phi(\hat{w}_j)&\geq \mathbb{E}_{P_{\theta_j}}[f(\hat{w}_j,x)]\geq \mathbb{E}_{S_{j}}[f(\hat{w}_{j},x)] - \frac{\bar{f}\sqrt{\log(2/\delta)}}{\sqrt{n}}\\
    &\geq \mathbb{E}_{S_{j}}[f(w_j,x)]+\left\langle g(w_j,S_j),\hat{w}_{j}-w_j\right\rangle-\frac{\beta}{2}\|\hat{w}_{j}-w_j\|^2- \frac{\bar{f}\sqrt{\log(2/\delta)}}{\sqrt{n}}\\
    &\geq \mathbb{E}_{P_{\theta_j}}[f(w_j,x)]+\left\langle g(w_j,S_j),\hat{w}_{j}-w_j\right\rangle-\frac{\beta}{2}\|\hat{w}_{j}-w_j\|^2- \frac{2\bar{f}\sqrt{\log(2/\delta)}}{\sqrt{n}}\\
    &\geq \phi(w_j)-\Delta'_j +\left\langle g(w_j,S_j),\hat{w}_{j}-w_j\right\rangle-\frac{\beta}{2}\|\hat{w}_{j}-w_j\|^2- \frac{2\bar{f}\sqrt{\log(2/\delta)}}{\sqrt{n}},
\end{split}
\end{equation}
where the second inequality holds by applying McDiarmid's inequality on $\mathbb{E}_{P_{\theta_j}}[f(\hat{w}_j,x)]$ and $\bar{f}$ is the upper bound of $f$, the third inequality holds by \eqref{eqn:smoothness} and $g(w_j,S_j)=\mathbb{E}_{S_j}[\nabla_w f(w_j,x)]$, the forth inequality holds by applying McDiarmid's inequality on $\mathbb{E}_{S_{j}}[f(w_j,x)]$, and the last inequality holds by Theorem \ref{thm:analysis}. 

Now, it holds that
\begin{equation}\label{eqn:proofconvergence_3}
\begin{split}
    &\phi_{\frac{1}{2\beta}}(w_{j+1})\\
    =&\min_{w'}\phi(w')+\beta\|w_{j+1}-w'\|^2\\
    \leq &\phi(\hat{w}_j)+\beta\|w_{j}-\lambda g(w_{j},S_j)-\hat{w}_j\|^2\\
    =&\phi(\hat{w}_j)+\beta\|w_{j}-\hat{w}_j\|^2+2\beta\lambda <g(w_{j},S_j),\hat{w}_j-w_j>+\lambda^2\beta\|g(w_{j},S_j)\|^2 \\
    \leq& \phi_{\frac{1}{2\beta}}(w_{j})+2\beta\lambda <g(w_{j},S_j),\hat{w}_j-w_j>+\lambda^2\beta\|g(w_{j},S_j)\|^2 \\
    \leq& \phi_{\frac{1}{2\beta}}(w_{j})+2\beta\lambda \left( \phi(\hat{w}_j)-\phi(w_j)+\Delta'+\frac{\beta}{2}\|\hat{w}_{j}-w_j\|^2+\frac{2\bar{f}\sqrt{\log(2/\delta)}}{\sqrt{n}}\right)+\lambda^2\beta L^2,
\end{split}
\end{equation}
where the last inequality holds by \eqref{eqn:bound_phi}.

Summing up inequality \eqref{eqn:proofconvergence_3} from $j=1$ to $j=H$, we have
\begin{equation}
\begin{aligned}
&\phi_{\frac{1}{2\beta}}(w_{j+1})\\
\leq &\phi_{\frac{1}{2\beta}}(w_{1})+2\beta\lambda \sum_{j=1}^H\left( \phi(\hat{w}_j)-\phi(w_j)+\Delta'_j+\frac{\beta}{2}\|\hat{w}_{j}-w_j\|^2+\frac{2\bar{f}\sqrt{\log(2/\delta)}}{\sqrt{n}}\right)+\lambda^2\beta L^2H,
\end{aligned}
\end{equation} 
and we further have
\begin{equation}
\begin{split}
&\frac{1}{H}\sum_{j=1}^H\left( \phi(w_j)-\phi(\hat{w}_j)-\frac{\beta}{2}\|\hat{w}_{j}-w_j\|^2\right)\\
\leq& \Delta'_j+\frac{2\bar{f}\sqrt{\log(2/\delta)}}{\sqrt{n}}+\frac{\phi_{\frac{1}{2\beta}}(w_{1})-\min_w\phi(w)}{2\beta\lambda H}+\frac{\lambda L^2}{2},
\end{split}
\end{equation} 
Also, it holds that 
\begin{equation}
\begin{split}
  &\phi(w_j)-\phi(\hat{w}_j)-\frac{\beta}{2}\|\hat{w}_{j}-w_j\|^2\\
= & \phi(w_j)+\beta \|w_j-w_j\|^2-\phi(\hat{w}_j)-\beta\|\hat{w}_{j}-w_j\|^2+\frac{\beta}{2}\|\hat{w}_{j}-w_j\|^2\\
\geq & \frac{\beta}{2}\|\hat{w}_{j}-w_j\|^2=\frac{1}{4\beta}\|\nabla\phi_{\frac{1}{2\beta}}(w_j)\|^2,
\end{split}
\end{equation}
where the inequality holds by the definition of $\hat{w}_j$, and the last equality holds by the property of Moreau envelope such that $\frac{1}{2\beta}\nabla\phi_{\frac{1}{2\beta}}(w)=w-\hat{w}_j$ for any $w\in\mathcal{W}$. 
Therefore, we have 
\begin{equation}
\begin{split}
&\frac{1}{H}\sum_{j=1}^H \|\nabla\phi_{\frac{1}{2\beta}}(w_j)\|^2\\
\leq& 4\beta \frac{1}{H}\sum_{j=1}^H\Delta'_j+\frac{8\beta \bar{f}\sqrt{\log(2/\delta)}}{\sqrt{n}}+\frac{2(\phi_{\frac{1}{2\beta}}(w_{1})-\min_w\phi(w))}{\lambda H}+2\beta\lambda L^2,
\end{split}
\end{equation} 
By optimally choosing $\lambda=\sqrt{\frac{\phi_{\frac{1}{2\beta}}(w_{1})-\min_w\phi(w)}{\beta L^2 H}}$, we have
\begin{equation}
\begin{split}
&\mathbb{E}\left[\|\nabla\phi_{\frac{1}{2\beta}}(w_j)\|^2\right]\\
\leq& 4\beta \Delta'+\frac{8\beta \bar{f}\sqrt{\log(2/\delta)}}{\sqrt{n}}+4L\sqrt{\frac{(\phi_{\frac{1}{2\beta}}(w_{1})-\min_w\phi(w))\beta}{H}},
\end{split}
\end{equation}

\end{proof}

\subsubsection{Explanation of Convergence by Moreau envelop}\label{sec:Moreau_envelop}
The gradient bound of Moreau envelop indicates that the algorithm converges to an approximately stationary point, which is explained as below. 
The Moreau envelop $\nabla\phi_{\frac{1}{2\beta}}(w_j)$ satisfies 
$\nabla\phi_{\frac{1}{2\beta}}(w_j)=2\beta\cdot (w_j-\hat{w}_j)$. Since the proximal point is $\hat{w}_j=\arg\min_w \phi(w)+\beta \|w-w_j\|^2$, we have $\nabla\phi(\hat{w}_j)+2\beta (\hat{w}_j-w_j)=0$. Thus, it holds that $\nabla\phi_{\frac{1}{2\beta}}(w_j)=2\beta\cdot (w_j-\hat{w}_j)=\nabla\phi(\hat{w}_j)$ and $\|\hat{w}_j-w_j\|=\|\frac{1}{2\beta}\nabla\phi(\hat{w}_j)\|=\frac{1}{2\beta}\|\nabla\phi_{\frac{1}{2\beta}}(w_j)\|$.
Therefore, if the gradient of Moreau envelop is bounded for the decision variable $w$, i.e. $\|\nabla\phi_{\frac{1}{2\beta}}(w)\|\leq \Delta$, its proximal point $\hat{w}$ is an approximately stationary point for the optimal inner maximization function $\phi$ (bounded gradient of the inner maximization function $\|\nabla\phi(\hat{w})\|\leq \Delta$), and the distance between $w$ and $\hat{w}$ is close enough: 
$\|\hat{w}_j-w_j\|=\frac{1}{2\beta}\|\nabla\phi_{\frac{1}{2\beta}}(w_j)\|\leq \frac{\Delta}{2\beta}$. Thus, Algorithm \ref{DAG-DRL} approximately converges.

\section{Details of Experiments}\label{app:exp}
\subsection{Experiment Setups}

This section reports numerical experiments on a representative \ml task—time series forecasting based on the Electricity Maps \cite{electricitymaps_datasets_portal} datasets —to evaluate the effectiveness of the proposed algorithms.
 
    \subsubsection{Baselines}\label{sec:baselines}
    The baselines, which are compared with our algorithms in our experiments, are introduced as follows.
    
    \textbf{Standard ML} (\ml): This method trains the \ml to minimize the time series forecasting error without \dro. 

    \textbf{Diffusion-based ML} (\dml): 
    This is an \ml model fine-tuned with diffusion-generated augmented datasets. 
    Compared to \ouralg, \dml performs standard training based on the augmented datasets rather than a distributionally robust training. 
            
    \textbf{Wasserstein-based \dro} (\wdro): In this \dro framework, the ambiguity set is characterized by the Wasserstein metric. For our experiments, we employ the FWDRO algorithm proposed in \cite{staib2017distributionally}, which transforms the inner maximization of \wdro into an adversarial optimization with a mixed norm ball and then alternatively solves the adversarial examples and the \ml weights.
        
    \textbf{KL-divergence-based \dro} (\kldro): In this \dro framework, the ambiguity set is characterized by the \kl- divergence. We employ the standard \kldro solution derived in \cite{KL-DRO_hu2013kullback}, which is commonly adopted in practice.
    
    \subsubsection{Datasets}\label{sec:datasets}
    The experiments are conducted based on Electricity Maps \cite{electricitymaps_datasets_portal}, a widely utilized global platform that provides high-resolution spatio-temporal data on electricity system operations, including carbon intensity (gCO$_2$eq/kWh) and energy mix, and is actively employed for carbon-aware scheduling and carbon footprint estimation in real systems such as data centers \cite{electricitymaps_platform}.

    We utilize datasets from Electricity Maps that record hourly electricity carbon intensity over the period 2021$\sim$2024 across four representative regions: California, United States (\textbf{BANC\_21$\sim$24}), Texas, United States (\textbf{ERCO\_21$\sim$24}), Queensland, Australia (\textbf{QLD\_21$\sim$24}), and the United Kingdom (\textbf{GB\_21$\sim$24}). These datasets capture fine-grained temporal variations in carbon intensity (measured in gCO$_2$eq/kWh) arising from different energy mixes in diverse geographical and regulatory contexts, thereby providing a comprehensive benchmark for evaluating carbon-aware forecasting models. For model training, we construct a dataset by merging \textbf{BANC\_23} and \textbf{BANC\_24}, resulting in 438 sequence samples. Model evaluation is then performed on multiple independent test sets, each consisting of 312 sequence samples drawn from other years and regions to ensure heterogeneous and challenging testing scenarios. To quantify the degree of distributional shift between the training and test sets, we compute the Wasserstein distance, which provides a principled measure of discrepancy between probability distributions. The calculated distances are reported alongside the dataset names in Table~\ref{main_table_full}.

    \subsubsection{Training Setups} 
    
        The experimental setup is divided into the following parts:
        
        \textbf{Predictor}: The predictors in \ouralg and all the baselines share the same two-layer stacked LSTM architecture with 128 and 64 hidden neurons.

        \textbf{Diffusion Model}: The diffusion model in \ouralg is \ddpm \cite{NEURIPS2020_4c5bcfec} which has $T = 500$ steps in a forward or a backward process. 

        \textbf{Training}: For \ouralg, we adopt the PPO-based  reformulation in \eqref{eqn:ppo} for inner maximization. We train the reference \ddpm $\theta_0$ in \eqref{eqn:ppo} based on the original training dataset \textbf{BANC\_2324} ( \textbf{BANC\_23} \& \textbf{BANC\_24}) and use it to generate an initial dataset $\zinit$ to calculate $r_{\theta}$ in \eqref{eqn:ppo}. The sampling variance of \ddpm is chosen from a range $[0.1, 0.5]$. To improve training efficiency, only the last $T'=15$ backward steps of the \ddpm model are fine-tuned by \eqref{eqn:ppo}. We choose a slightly higher clipping parameter $\kappa=0.4$ in \eqref{eqn:ppo} to encourage the maximization while maintaining stability. We choose $\epsilon=0.015$ as \dagalg's adversarial budget, which gives the best average performance over all validation datasets. We choose $\eta = 0.01$ as the rate to update the Lagrangian parameter $\alpha$ in Algorithm \ref{DAG-DRL}. We use the Adam optimizer with a learning rate of $10^{-5}$ for both the diffusion training in the maximization and the predictor update in minimization. The diffusion model is trained for 10 inner epochs with a batch size of 64.  The predictor is trained for 15 epochs with a batch size of 64.

        For the baseline methods, we choose the same neural network architecture as \ouralg. We carefully tuned the hyperparameters of the baseline algorithms to achieve optimal average performance over all validation datasets. For \wdro, we consider the Wasserstein distance with respect to $l_2-$norm and set the adversarial budget as $\epsilon = 0.3$. For \kldro, we choose the adversarial budget $\epsilon = 4$.
        The predictors in all baseline methods are trained by Adam optimizer with a learning rate of $1 \times 10^{-5}$. 

        In terms of training, all baselines and \ouralg are initialized from the same \ml model pretrained for 100 epochs, using a batch size of 64 in both the pretraining and subsequent training phases. For our method, the outer minimization in \dro is performed for 15 iterations, and each outer iteration contains 10 inner maximization steps, during which the diffusion model is fine-tuned. After these 10 inner maximization steps, the current diffusion model generates a dataset $\ztheta$ of the same size as $\zinit$, and the pretrained \ml model is then further trained for two epochs on $\ztheta$. Consequently, our algorithm effectively fine-tunes the \ml model for 30 rounds in total, while utilizing 15 different augmented datasets $\ztheta$. To ensure a fully fair ablation setup, \dml is also fine-tuned for 30 rounds, but always with the same dataset $\zinit$. In contrast, since \wdro and \kldro are not part of the ablation study, they are trained for 100 epochs to achieve their best performance.

\subsection{Main results}

\begin{wraptable}{r}{0.6\textwidth}
\vspace{-8mm}
\centering
\caption{Test MSE on different datasets.}
\renewcommand{\arraystretch}{1.2}
\resizebox{0.6\textwidth}{!}{
\begin{tabular}{lccccc}
\toprule
\multicolumn{1}{c}{\multirow{2}{*}{\makecell{\textbf{Datasets} \\ (\textbf{Wasserstein Distance})}}}   & \multicolumn{5}{c}{\textbf{Algorithms}} \\
\cmidrule(lr){2-6}
 & \textbf{\ouralg} & \textbf{\kldro} & \textbf{\wdro} &  \textbf{\dml}  & \textbf{\ml}\\
\midrule

\multicolumn{1}{c}{\makecell{\textbf{BANC\_22}  (0.0240)}} & \textbf{0.0047} &	0.0086 &	0.0073 &	0.0078 &	0.0183   \\
\multicolumn{1}{c}{\makecell{\textbf{BANC\_21}  (0.1213)}} & \textbf{0.0054} &	0.0112 &	0.0121 &	0.0093 &	0.0238   \\
\midrule

\multicolumn{1}{c}{\makecell{\textbf{QLD\_24}  (0.2171)}} & \textbf{0.0450} &	0.0754 &	0.0823 &	0.0766 &	0.0887   \\
\multicolumn{1}{c}{\makecell{\textbf{QLD\_23}  (0.2033)}} & \textbf{0.0509} &	0.0831 &	0.0879 &	0.0834 & 	0.0946   \\
\multicolumn{1}{c}{\makecell{\textbf{QLD\_22}  (0.2782)}} & \textbf{0.0192} &	0.0379 &	0.0557 &	0.0352 &	0.0667   \\
\multicolumn{1}{c}{\makecell{\textbf{QLD\_21}  (0.3054)}} & \textbf{0.0186} &	0.0377 &	0.0574 &	0.0339 &	0.0696   \\
\midrule

\multicolumn{1}{c}{\makecell{\textbf{GB\_24}  (0.0419)}} & \textbf{0.0119} &	0.0178 &	0.0176 &	0.0172 &	0.0285   \\
\multicolumn{1}{c}{\makecell{\textbf{GB\_23}  (0.0666)}} & \textbf{0.0100} &	0.0178 &	0.0200 &	0.0164 &	0.0311   \\
\multicolumn{1}{c}{\makecell{\textbf{GB\_22}  (0.1255)}} & \textbf{0.0105} &	0.0197 &	0.0245 &	0.0172 &	0.0360    \\
\multicolumn{1}{c}{\makecell{\textbf{GB\_21}  (0.1359)}} & \textbf{0.0094} &	0.0181 &	0.0229 &	0.0158 &	0.0340    \\
\midrule

\multicolumn{1}{c}{\makecell{\textbf{ERCO\_24}  (0.1206)}} & \textbf{0.0158} &	0.0241 &	0.0266 &	0.0224 &	0.0379   \\
\multicolumn{1}{c}{\makecell{\textbf{ERCO\_23}  (0.1207)}} & \textbf{0.0106} &	0.0179 &	0.0196 &	0.0162 &	0.0319    \\
\multicolumn{1}{c}{\makecell{\textbf{ERCO\_22}  (0.1581)}} & \textbf{0.0076} &	0.0146 &	0.0187 &	0.0123 &	0.0312    \\
\multicolumn{1}{c}{\makecell{\textbf{ERCO\_21}  (0.1417)}} & \textbf{0.0093} &	0.0189 &	0.0263 &	0.0160 &	0.0382   \\
\midrule

\multicolumn{1}{c}{\textbf{Average}} & \textbf{0.0163} &	0.0288 &	0.0342 &	0.0271 &	0.0450    \\
\midrule
\multicolumn{1}{c}{\textbf{Maximum}} & \textbf{0.0509} &	0.0831 &	0.0879 &	0.0834 &	0.0946   \\

\bottomrule
\end{tabular}
}
\label{main_table_full}
\end{wraptable}

     We tune the hyperparameters of all methods based on validation datasets and present their best performance  in Table~\ref{main_table_full} where all the datasets are \ood testing datasets with different discrepancies.

    We can find that all \dro methods improve the testing performance for \ood datasets compared to \ml by optimizing the worst-case expected loss, while \dml improves upon \ml by leveraging diffusion-generated augmented datasets.
    Using the performance of \ml as the reference, \ouralg achieves the largest performance gain of 63.7\%, followed by \dml with a 39.7\% improvement. In contrast, \kldro and \wdro yield improvements of 36.1\% and 24.0\%, respectively.

    Notably, \dml surpasses \kldro by 3.6\% only after augmentation training with diffusion-generated datasets, highlighting the significant positive impact of the diffusion model. Moreover, \dml differs from \ouralg only in that it disables the \dro component, while their training procedures remain identical (see Appendix ~\ref{app:exp} for details). Thus, \ouralg, \dml, and \ml together form two ablation studies: the performance gap between \ouralg and \dml confirms that \dro contributes a 39.7\% performance gain to \ouralg, while the gap between \dml and \ml verifies that the diffusion model also provides a 39.7\% performance gain.

    Although both \dml and the other two \dro deliver noticeable performance gains, \ouralg still outperforms them by an average margin of 30.4\%. The underlying reason may lie in the fact that \ouralg leverages the distribution learning capability of the diffusion model when constructing the ambiguity set, enabling the generation of adversarial distributions that are both strong and realistic, thereby achieving superior \ood generalization. In contrast, the \kl-divergence used in \kldro requires all distributions to be absolutely continuous with respect to the training distribution $P_0$, which constrains the support space of the ambiguity set and limits the search for strong adversaries. Among the baselines, \wdro performs the worst, likely due to the fact that solving Wasserstein-based \dro usually involves optimal transport problems or their relaxations, which are computationally more demanding and often rely on approximate methods such as dual reformulation or adversarial training, resulting in suboptimal solutions. Moreover, since \wdro allows adversarial distributions with supports different from the training distribution, it is theoretically more flexible and closer to real distribution shifts, but this flexibility can produce overly extreme adversaries and thus lead to overly conservative model training.

\subsection{More Out-Of-Distribution Tests}
\subsubsection{Effect of Noisy Types}

\begin{wraptable}{r}{0.5\textwidth}
\vspace{-8mm}
\centering
\caption{Gaussian-Corrupted Test.}
\renewcommand{\arraystretch}{1.2}
\resizebox{0.5\textwidth}{!}{%
\begin{tabular}{lccccc}
\toprule
\multicolumn{1}{c}{\multirow{2}{*}{\textbf{Dataset}}}   & \multicolumn{5}{c}{\textbf{Algorithms}} \\
\cmidrule(lr){2-6}
 & \textbf{\ouralg} & \textbf{\kldro} & \textbf{\wdro} &  \textbf{\dml}  & \textbf{\ml}\\
\midrule

\multicolumn{1}{c}{\makecell{\textbf{BANC\_22}}} & \textbf{0.0170} &	0.0197 &	0.0183 &	0.0189 &	0.0291    \\
\multicolumn{1}{c}{\makecell{\textbf{BANC\_21}}} & \textbf{0.0177} &	0.0214 &	0.0216 &	0.0199 &	0.0337    \\
\midrule

\multicolumn{1}{c}{\makecell{\textbf{QLD\_24}}} & \textbf{0.0560} &	0.0839 &	0.0911 &	0.0847 &	0.0990    \\
\multicolumn{1}{c}{\makecell{\textbf{QLD\_23}}} & \textbf{0.0615} &	0.0925 &	0.0960 &	0.0934 &	0.1034    \\
\multicolumn{1}{c}{\makecell{\textbf{QLD\_22}}} & \textbf{0.0307} &	0.0476 &	0.0639 &	0.0445 &	0.0766    \\
\multicolumn{1}{c}{\makecell{\textbf{QLD\_21}}} & \textbf{0.0297} &	0.0475 &	0.0649 &	0.0431 &	0.0772    \\
\midrule

\multicolumn{1}{c}{\makecell{\textbf{GB\_24}}} & \textbf{0.0238} &	0.0280 &	0.0276 &	0.0279 &	0.0381    \\
\multicolumn{1}{c}{\makecell{\textbf{GB\_23}}} & \textbf{0.0221} &	0.0260 &	0.0305 &	0.0269 &	0.0414    \\
\multicolumn{1}{c}{\makecell{\textbf{GB\_22}}} & \textbf{0.0227} &	0.0302 &	0.0343 &	0.0279 &	0.0458     \\
\multicolumn{1}{c}{\makecell{\textbf{GB\_21}}} & \textbf{0.0211} &	0.0279 &	0.0334 &	0.0256 &	0.0438     \\
\midrule

\multicolumn{1}{c}{\makecell{\textbf{ERCO\_24}}} & \textbf{0.0281} &	0.0350 &	0.0369 &	0.0340 &	0.0471    \\
\multicolumn{1}{c}{\makecell{\textbf{ERCO\_23}}} & \textbf{0.0228} &	0.0281 &	0.0294 &	0.0267 &	0.0418     \\
\multicolumn{1}{c}{\makecell{\textbf{ERCO\_22}}} & \textbf{0.0206} &	0.0256 &	0.0293 &	0.0230 &	0.0416     \\
\multicolumn{1}{c}{\makecell{\textbf{ERCO\_21}}} & \textbf{0.0217} &	0.0295 &	0.0357 &	0.0264 &	0.0475    \\
\midrule

\multicolumn{1}{c}{\textbf{Average}} & \textbf{0.0282} &	0.0388 &	0.0438 &	0.0373 &	0.0547     \\
\midrule
\multicolumn{1}{c}{\textbf{Maximum}} & \textbf{0.0615} &	0.0925 &	0.0960 &	0.0934 &	0.1034    \\
\bottomrule

\end{tabular}
}
\label{gaussian_table}
\end{wraptable}

In this test, we add a certain amount of different types of noise into each test set to compare the noise robustness of different algorithms. The noise types include Gaussian Noise, Perlin Noise, and Cutout Noise.

\textbf{Gaussian Noise}: In the Gaussian Noise test, we add Gaussian Noise with $\sigma=0.1$ to each test set. As shown in Table~\ref{gaussian_table}, all algorithms exhibit performance degradation compared to the noise-free setting. Nevertheless, \ouralg still significantly outperforms the others: taking \ml as the reference, our method achieves a 48.3\% improvement, followed by \dml and \kldro with gains of 31.7\% and 29.15\%, respectively. The weakest performer is \wdro, which surpasses \ml by only 20\%. In fact, \wdro is relatively adept at handling Gaussian Noise compared to other noise types, and thus maintains a noticeable advantage even under $\sigma=0.1$ Gaussian Noise, a trend further confirmed in subsequent experiments. On the other hand, both \dml and \ouralg are trained with diffusion-generated augmented datasets, which inherently possess noise characteristics due to the Gaussian-based diffusion process. As a result, their performance remains robust under Gaussian Noise perturbations.

\textbf{Perlin Noise}: Perlin Noise is a smooth pseudo-random gradient noise commonly used to simulate natural textures such as clouds, terrains, and wood grains. By combining multiple Perlin Noise components with different frequencies and amplitudes (known as octaves), more complex fractal noise can be produced. In this experiment, we superimpose 8 layers of Perlin Noise, with the noise amplitude normalized to the range [-1, 1]. As shown in Table~\ref{perlin_table}, taking \ml as the reference, our method outperforms \ml by 76.4\%, while \dml achieves a 58.0\% improvement, and both perform better than in the Gaussian Noise test. Although \kldro also shows a larger gain relative to \ml, its MSE remains roughly the same as in the Gaussian Noise test; the apparent improvement is primarily due to the substantial performance drop of \ml in this experiment. 

\begin{wraptable}{r}{0.5\textwidth}
\vspace{-5mm}
\centering
\caption{Perlin-Corrupted Test.}
\renewcommand{\arraystretch}{1.2}
\resizebox{0.5\textwidth}{!}{
\begin{tabular}{lccccc}
\toprule
\multicolumn{1}{c}{\multirow{2}{*}{\textbf{Dataset}}}   & \multicolumn{5}{c}{\textbf{Algorithms}} \\
\cmidrule(lr){2-6}
 & \textbf{\ouralg} & \textbf{\kldro} & \textbf{\wdro} &  \textbf{\dml}  & \textbf{\ml}\\
\midrule

\multicolumn{1}{c}{\makecell{\textbf{BANC\_22}}} & \textbf{0.0117} &	0.0296 &	0.0620 &	0.0227 &	0.0663     \\
\multicolumn{1}{c}{\makecell{\textbf{BANC\_21}}} & \textbf{0.0110} &	0.0288 &	0.0591 &	0.0211 &	0.0662     \\
\midrule

\multicolumn{1}{c}{\makecell{\textbf{QLD\_24}}} & \textbf{0.0355} &	0.0636 &	0.0862 &	0.0588 &	0.0928     \\
\multicolumn{1}{c}{\makecell{\textbf{QLD\_23}}} & \textbf{0.0406} &	0.0685 &	0.0787 &	0.0669 &	0.0860     \\
\multicolumn{1}{c}{\makecell{\textbf{QLD\_22}}} & \textbf{0.0186} &	0.0476 &	0.0852 &	0.0341 &	0.0904     \\
\multicolumn{1}{c}{\makecell{\textbf{QLD\_21}}} & \textbf{0.0192} &	0.0475 &	0.0865 &	0.0355 &	0.0940     \\
\midrule

\multicolumn{1}{c}{\makecell{\textbf{GB\_24}}} & \textbf{0.0147} &	0.0272 &	0.0499 &	0.0252 &	0.0563     \\
\multicolumn{1}{c}{\makecell{\textbf{GB\_23}}} & \textbf{0.0133} &	0.0295 &	0.0560 &	0.0239 &	0.0628     \\
\multicolumn{1}{c}{\makecell{\textbf{GB\_22}}} & \textbf{0.0132} &	0.0311 &	0.0611 &	0.0247 &	0.0666      \\
\multicolumn{1}{c}{\makecell{\textbf{GB\_21}}} & \textbf{0.0114} &	0.0311 &	0.0586 &	0.0224 &	0.0677      \\
\midrule

\multicolumn{1}{c}{\makecell{\textbf{ERCO\_24}}} & \textbf{0.0154} &	0.0267 &	0.0436 &	0.0238 &	0.0525     \\
\multicolumn{1}{c}{\makecell{\textbf{ERCO\_23}}} & \textbf{0.0142} &	0.0382 &	0.0767 &	0.0262 &	0.0829      \\
\multicolumn{1}{c}{\makecell{\textbf{ERCO\_22}}} & \textbf{0.0109} &	0.0324 &	0.0655 &	0.0223 &	0.0732      \\
\multicolumn{1}{c}{\makecell{\textbf{ERCO\_21}}} & \textbf{0.0104} &	0.0283 &	0.0506 &	0.0194 &	0.0605     \\
\midrule

\multicolumn{1}{c}{\textbf{Average}} & \textbf{0.0171} &	0.0379 &	0.0657 &	0.0305 &	0.0727      \\
\midrule
\multicolumn{1}{c}{\textbf{Maximum}} & \textbf{0.0406} &	0.0685 &	0.0865 &	0.0669 &	0.0940     \\
\bottomrule
\end{tabular}
}
\label{perlin_table}
\end{wraptable}

 In contrast, \wdro outperforms \ml by only 9.6\%, a significant decline compared to its performance under Gaussian Noise, with the MSE reduced by as much as 50\%.  This indicates that \wdro is not well-suited for handling Perlin Noise, likely because the Wasserstein distance measures global distributional transport cost and is more effective in capturing smooth, small perturbations (e.g., Gaussian Noise), but fails to adequately model the long-range correlated patterns of Perlin Noise within the Wasserstein ball.

\textbf{Cutout Noise}: Cutout Noise is a commonly used perturbation method that randomly selects a region of the input data and sets its values to a constant, thereby simulating partial information loss.  In our experiment, we randomly mask 30\% of the sequence and set the masked values to a constant of 1. As shown in Table~\ref{cutout_table}, the performance of all algorithms is very close to their performance under Perlin Noise. 

\begin{wraptable}{r}{0.5\textwidth}
\vspace{-10mm}
\centering
\caption{Cutout-Corrupted Test.}
\renewcommand{\arraystretch}{1.2}
\resizebox{0.5\textwidth}{!}{
\begin{tabular}{lccccc}
\toprule
\multicolumn{1}{c}{\multirow{2}{*}{\textbf{Dataset}}}   & \multicolumn{5}{c}{\textbf{Algorithms}} \\
\cmidrule(lr){2-6}
 & \textbf{\ouralg} & \textbf{\kldro} & \textbf{\wdro} &  \textbf{\dml}  & \textbf{\ml}\\
\midrule

\multicolumn{1}{c}{\makecell{\textbf{BANC\_22}}} & \textbf{0.0063} &	0.0181 &	0.0297 &	0.0125 &	0.0385      \\
\multicolumn{1}{c}{\makecell{\textbf{BANC\_21}}} & \textbf{0.0095} &	0.0302 &	0.0547 &	0.0192 &	0.0637      \\
\midrule

\multicolumn{1}{c}{\makecell{\textbf{QLD\_24}}} & \textbf{0.0404} &	0.0798 &	0.1096 &	0.0697 &	0.1148      \\
\multicolumn{1}{c}{\makecell{\textbf{QLD\_23}}} & \textbf{0.0426} &	0.0802 &	0.1028 &	0.0743 &	0.1092      \\
\multicolumn{1}{c}{\makecell{\textbf{QLD\_22}}} & \textbf{0.0196} &	0.0507 &	0.0881 &	0.0384 &	0.0971      \\
\multicolumn{1}{c}{\makecell{\textbf{QLD\_21}}} & \textbf{0.0208} &	0.0534 &	0.0916 &	0.0400 &	0.1005      \\
\midrule

\multicolumn{1}{c}{\makecell{\textbf{GB\_24}}} & \textbf{0.0145} &	0.0351 &	0.0598 &	0.0252 &	0.0688      \\
\multicolumn{1}{c}{\makecell{\textbf{GB\_23}}} & \textbf{0.0122} &	0.0343 &	0.0603 &	0.0230 &	0.0681      \\
\multicolumn{1}{c}{\makecell{\textbf{GB\_22}}} & \textbf{0.0129} &	0.0361 &	0.0639 &	0.0245 &	0.0720       \\
\multicolumn{1}{c}{\makecell{\textbf{GB\_21}}} & \textbf{0.0127} &	0.0356 &	0.0651 &	0.0245 &	0.0740       \\
\midrule

\multicolumn{1}{c}{\makecell{\textbf{ERCO\_24}}} & \textbf{0.0157} &	0.0375 &	0.0643 &	0.0275 &	0.0732      \\
\multicolumn{1}{c}{\makecell{\textbf{ERCO\_23}}} & \textbf{0.0134} &	0.0349 &	0.0599 &	0.0241 &	0.0678       \\
\multicolumn{1}{c}{\makecell{\textbf{ERCO\_22}}} & \textbf{0.0134} &	0.0349 &	0.0599 &	0.0241 &	0.0678       \\
\multicolumn{1}{c}{\makecell{\textbf{ERCO\_21}}} & \textbf{0.0116} &	0.0354 &	0.0671 &	0.0225 &	0.0758      \\
\midrule

\multicolumn{1}{c}{\textbf{Average}} & \textbf{0.0174} &	0.0425 &	0.0699 &	0.0319 &	0.0782       \\
\midrule
\multicolumn{1}{c}{\textbf{Maximum}} & \textbf{0.0426} &	0.0802 &	0.1096 &	0.0743 &	0.1148      \\
\bottomrule

\end{tabular}
}
\label{cutout_table}
\end{wraptable}

 We attribute this to the fact that, although Perlin and Cutout Noises differ in form, both represent structured local perturbations that disrupt the continuity of the input patterns, thereby posing similar challenges to all 
algorithms and resulting in comparable performance under these two types of noise at a given perturbation level. This is further confirmed in the subsequent gradient-perturbation tests.

\subsubsection{Effect of Noisy Levels}

In this experiment, we progressively increased the intensity of three types of noise. For Gaussian Noise, the perturbation range is set to $\sigma \in [0.05, 0.2]$; for Perlin Noise, the amplitude is controlled within the range $[0.05, 1]$; and for Cutout Noise, the Cutout Mask Ratio is adjusted between $[10\%, 40\%]$. As shown in Figures~\ref{Grad_Gaussian},~\ref{Grad_Perlin},~\ref{Grad_Cutout}, \ouralg consistently outperforms all baseline methods across different noise types and intensity levels. With increasing noise strength, we observe that variations in Gaussian Noise have a more pronounced impact on all methods compared to Perlin and Cutout Noise. In contrast, the impact of stronger Perlin and Cutout Noise on \ouralg remains limited, and even at higher noise levels, \ouralg maintains stable and superior performance, highlighting its strong robustness. For \wdro, however, the performance degradation under Perlin and Cutout Noise is much greater, with trends almost identical to \ml, indicating that \wdro is not effective in handling Perlin and Cutout Noise but is relatively better at coping with Gaussian Noise. Interestingly, when the Cutout Mask Ratio is 30\%, the performance of all algorithms is nearly identical to their performance under Perlin Noise with amplitude 1, suggesting that at specific noise levels, Perlin and Cutout—though different in form—both represent structured local perturbations that disrupt input continuity to a similar degree, thereby producing comparable impacts on the algorithms.

    \begin{figure*}[htbp]
        \centering
        \resizebox{1\textwidth}{!}{%
            \parbox{\textwidth}{%
                \begin{minipage}[b]{0.49\textwidth}
                    \centering
                    \includegraphics[width=\textwidth]{./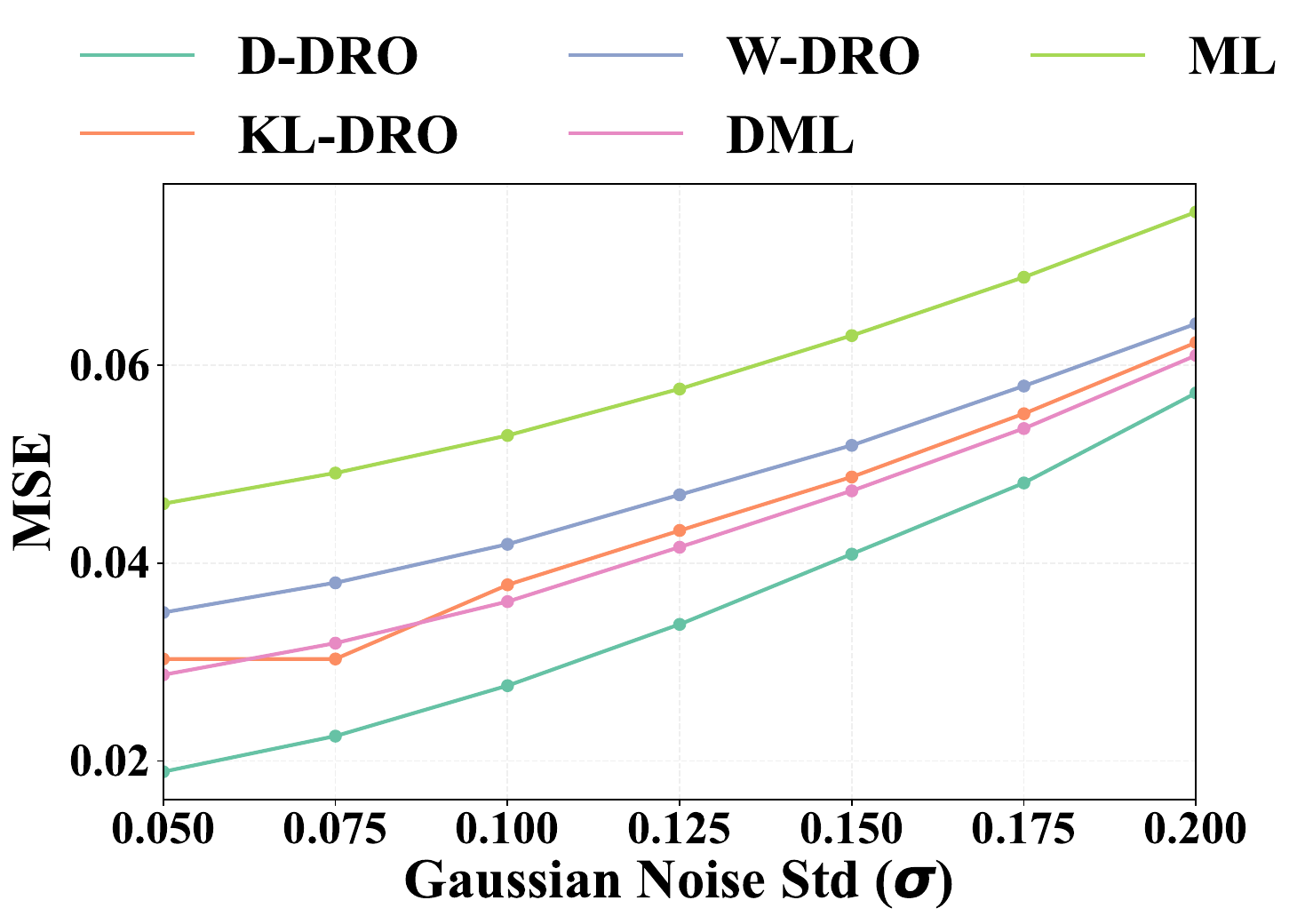}
                    \caption{Gaussian perturbation strength}
                    \label{Grad_Gaussian}
                \end{minipage}
                \hfill
                \begin{minipage}[b]{0.49\textwidth}
                    \centering
                    \includegraphics[width=\textwidth]{./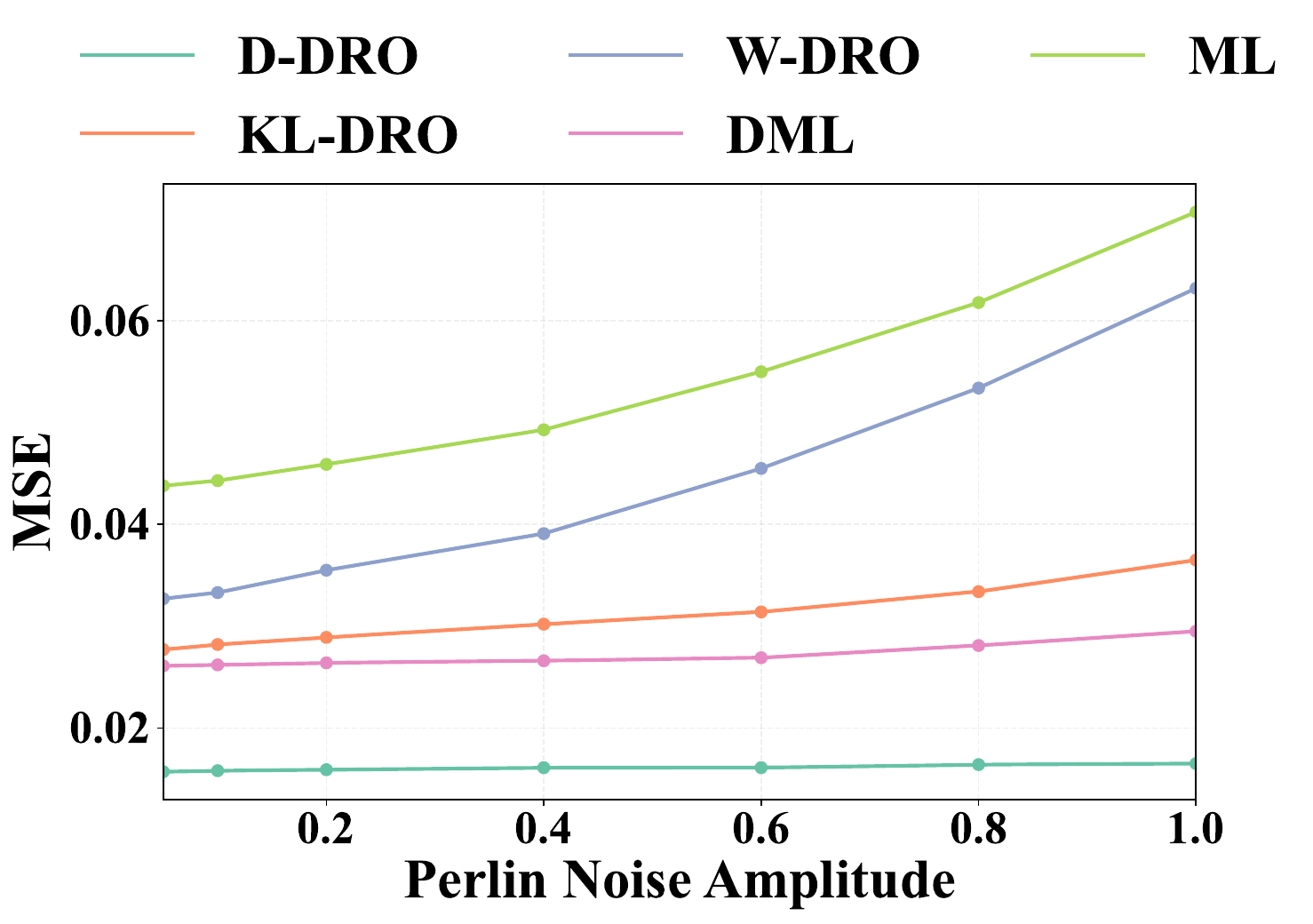}
                    \caption{Perlin perturbation strength}
                    \label{Grad_Perlin}
                \end{minipage}
                \vfill
                \begin{minipage}[b]{0.49\textwidth}
                    \centering
                    \includegraphics[width=\textwidth]{./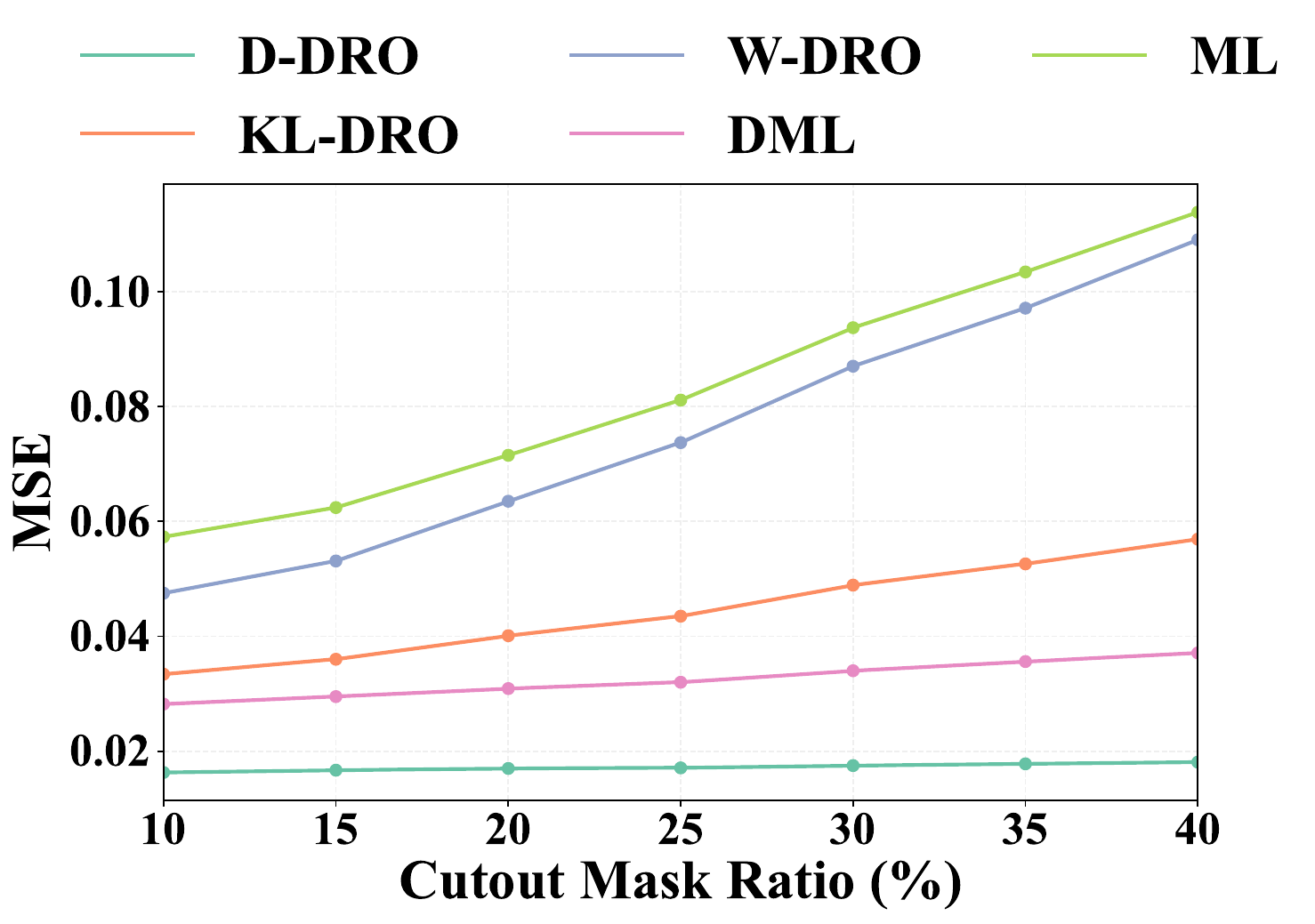}
                    \caption{Cutout perturbation strength}
                    \label{Grad_Cutout}
                \end{minipage}
                \hfill
                \begin{minipage}[b]{0.49\textwidth}
                    \centering
                    \includegraphics[width=\textwidth]{./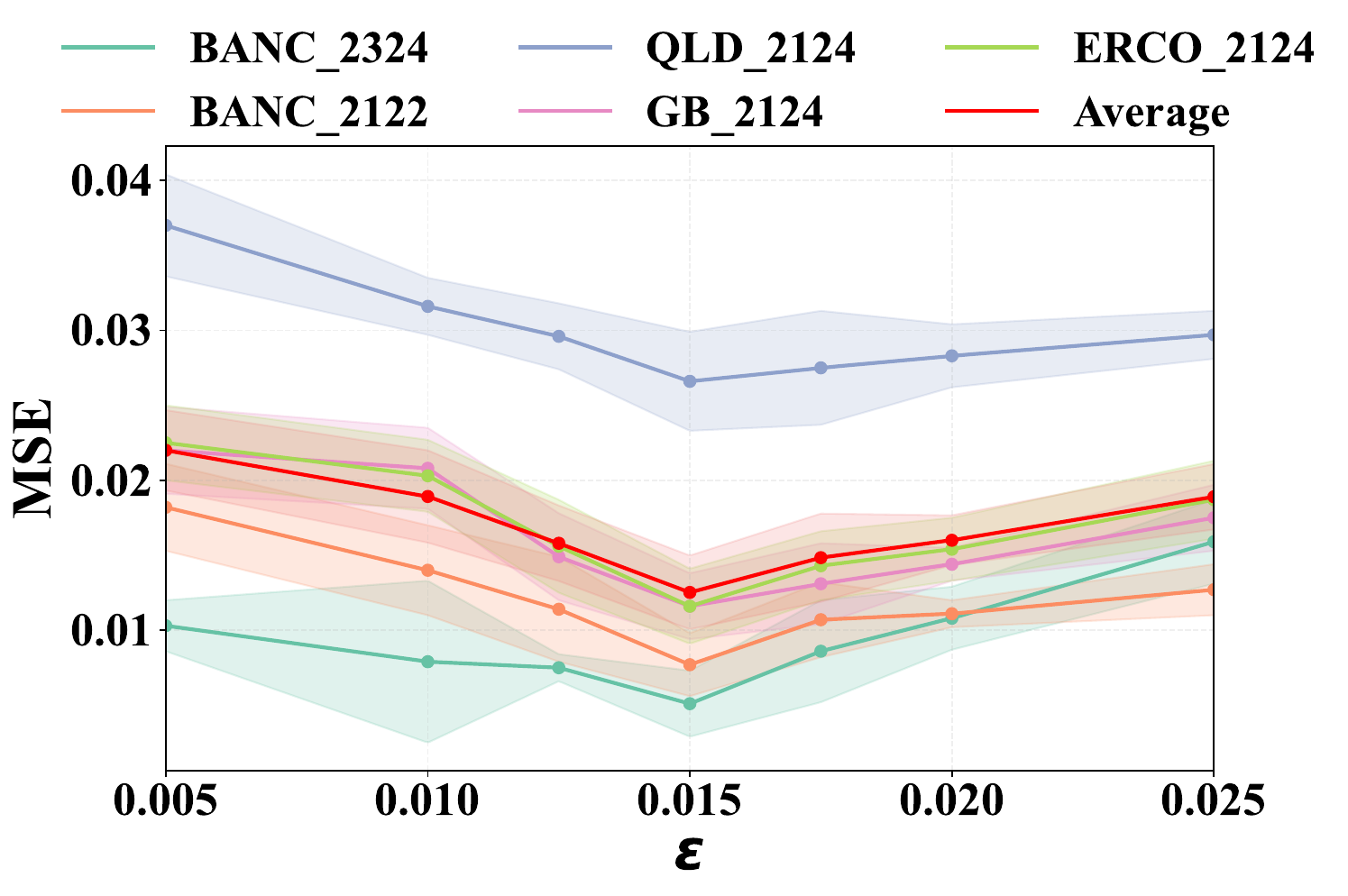}
                    \caption{Effect of budget $\epsilon$ in \ouralg }
                    \label{eps_select}
                \end{minipage}
            }
        }
    \end{figure*}

\subsection{Effects of \dro Budget}

    Finally, we examine the impact of the budget parameter $\epsilon$ in \eqref{eqn:objective} on the performance of \ouralg. As illustrated in Fig.~\ref{eps_select}, the loss–$\epsilon$ curves across all datasets display a concave trend, with the best average performance achieved around $\epsilon = 0.015$. When $\epsilon$ is smaller than this threshold, the diffusion-modelled distributions are overly restricted to the training data, thereby hindering the ability of \ouralg to generalize to \ood datasets. In contrast, when $\epsilon$ becomes excessively large, the enlarged ambiguity set causes \ouralg to conservatively optimize against irrelevant distributions, which degrades its performance on real \ood datasets.
    Hence, selecting an appropriate value of $\epsilon$ is essential for constructing effective adversarial distributions, ensuring a proper balance between average-case and worst-case performance.

%%%%%%%%%%%%%%%%%%%%%%%%%%%%%%%%%%%%%%%%%%%%%%%%%%%%%%%%%%%%

\end{document}